\newcommand{\nobracket}{}
\newcommand{\nocomma}{}
\newcommand{\noplus}{}
\newcommand{\tmmathbf}[1]{\ensuremath{\boldsymbol{#1}}}
\newtheorem{theorem}{Theorem}
\newtheorem{lemma}{Lemma}
\newcommand{\tmop}[1]{\ensuremath{\operatorname{#1}}}
\newcommand{\nosymbol}{}
\DeclareMathOperator{\sign}{sign}
\DeclareMathOperator{\Tr}{Tr}
\icmltitlerunning{Spectral Non-convex Optimization for Dimension Reduction with Hilbert-Schmidt Independence Criterion}
\begin{document}

\twocolumn[
\icmltitle{Spectral Non-Convex Optimization for Dimension Reduction with \\
            Hilbert-Schmidt Independence Criterion}

%Spectral Method for Non-Convex Hilbert-Schmidt Independence Criterion-based Dimensionality Reduction
%Spectral Non-Convex Optimization for Hilbert-Schmidt Independence Criterion-based Dimensionality Reduction
% It is OKAY to include author information, even for blind
% submissions: the style file will automatically remove it for you
% unless you've provided the [accepted] option to the icml2019
% package.

% List of affiliations: The first argument should be a (short)
% identifier you will use later to specify author affiliations
% Academic affiliations should list Department, University, City, Region, Country
% Industry affiliations should list Company, City, Region, Country

% You can specify symbols, otherwise they are numbered in order.
% Ideally, you should not use this facility. Affiliations will be numbered
% in order of appearance and this is the preferred way.
\icmlsetsymbol{equal}{*}

\begin{icmlauthorlist}
\icmlauthor{Chieh Wu}{NEU}
\icmlauthor{Jared Miller}{NEU}
\icmlauthor{Yale Chang}{Philips}
\icmlauthor{Jennifer Dy}{NEU}

\end{icmlauthorlist}

\icmlaffiliation{NEU}{Northeastern University, Boston, MA, 02120}
\icmlaffiliation{Philips}{Philips Research North America, Cambridge, MA, 02141}

% You may provide any keywords that you
% find helpful for describing your paper; these are used to populate
% the "keywords" metadata in the PDF but will not be shown in the document
\icmlkeywords{Machine Learning, ICML, Dimensionality Reduction, Spectral Clustering, Nonconvex, Optimization}

%\icmlaffiliation{to}{Department of Computation, University of Torontoland, Torontoland, Canada}
%\icmlaffiliation{goo}{Googol ShallowMind, New London, Michigan, USA}
%\icmlaffiliation{ed}{School of Computation, University of Edenborrow, Edenborrow, United Kingdom}

\icmlcorrespondingauthor{Chieh Wu, Jared Miller}{\{wu.chie,miller.jare\}@husky.neu.edu}
% You may provide any keywords that you
% find helpful for describing your paper; these are used to populate
% the "keywords" metadata in the PDF but will not be shown in the document
\icmlkeywords{Machine Learning, ICML}

\vskip 0.3in
]

% this must go after the closing bracket ] following \twocolumn[ ...

% This command actually creates the footnote in the first column
% listing the affiliations and the copyright notice.
% The command takes one argument, which is text to display at the start of the footnote.
% The \icmlEqualContribution command is standard text for equal contribution.
% Remove it (just {}) if you do not need this facility.

\printAffiliationsAndNotice{}  % leave blank if no need to mention equal contribution
%\printAffiliationsAndNotice{\icmlEqualContribution} % otherwise use the standard text.

\begin{abstract}
The Hilbert Schmidt Independence Criterion (HSIC) is a kernel dependence measure that has applications in various aspects of machine learning. Conveniently, the objectives of different dimensionality reduction applications using HSIC often reduce to the same opti
mization problem. However, the nonconvexity of the objective function arising from non-linear kernels poses a serious challenge to optimization efficiency and limits the potential of HSIC-based formulations. As a result, only linear kernels have been computationally tractable in practice. This paper proposes a spectral-based optimization algorithm that extends beyond the linear kernel. The algorithm identifies a family of suitable kernels and provides the first and second-order local guarantees when a fixed point is reached. Furthermore, we propose a principled initialization strategy, thereby removing the need to repeat the algorithm at random initialization points. Compared to state-of-the-art optimization algorithms, our empirical results on real data show a run-time improvement by as much as a factor of $10^5$ while consistently achieving lower cost and classification/clustering errors. The implementation source code is publicly available on github.\footnote{\href{https://github.com/ANONYMIZED}{https://github.com/endsley}}

\end{abstract}

\section{Introduction}
\label{submission}
The Hilbert-Schmidt Independence Criterion (HSIC) is a kernel-based dependence measure that enables the estimation of dependence between variables without the explicit estimation of their joint distribution. 
As a dependence measure, HSIC has been widely applied to dimensionality reduction in a variety of machine learning paradigms, including supervised dimension reduction~\cite{fukumizu2009kernel,masaeli2010transformation}, unsupervised dimension reduction ~\cite{scholkopf1998nonlinear,niu2011dimensionality}, semi-supervised \cite{gangeh2016semi,chang2017clustering}, and alternative clustering ~\cite{wu2018iterative,niu2010multiple,niu2014iterative}. Although these paradigms are different, when learning a low-dimensional subspace is incorporated into the objective, the objective function for these HSIC-based dimensionality reduction problems can often be formulated as 

%Although these paradigms are different, the objective function for these HSIC-based dimensionality reduction problems for learning a low-dimensional subspace can often be formulated as 

%    \begin{align} 
%    \underset{W}{\min} & 
%    - \Tr ( \Gamma K_{X W})\\
%    \text{s.t.} & W^T W = I
%       \label{eq:obj_1}
%    \end{align}
    
\begin{equation}
    \underset{W}{\min} - \Tr ( \Gamma K_{X W}) \hspace{0.4cm} \text{s.t.} \hspace{0.2cm} W^T W = I,
     \label{eq:obj_1}
\end{equation}
where $X \in \mathbb{R}^{n \times d}$ is the data set with $n$ samples and $d$ features,
$W \in \mathbb{R}^{d \times q}$ is the projection matrix with orthogonal columns. 
Here,  $\Gamma\in\mathbb{R}^{n\times n}$ is a symmetric matrix of real values and
$K_{XW}\in\mathbb{R}^{n\times n}$ is a kernel matrix with each entry defined as $K_{XW_{ij}}=k(W^Tx_i,W^Tx_j)$ where $k: \mathbb{R}^q \times \mathbb{R}^q \rightarrow \mathbb{R}$ is a kernel function.

Since $q \ll d$, $W$ is a projection matrix that projects $X$ into a lower dimensional subspace. Therefore, the problem is formulated to discover the subspace $W$ while achieving a specific objective. However, the optimization of this formulation is challenging for two reasons.
First, this is due to the fact that the formulation is highly non-convex when the kernel operating on $XW$ is non-linear. Second, it includes an orthogonality constraint, where the solution $W$ must satisfy the condition $W^TW=I$.  As a result, only linear kernels have been tractable computationally. 
%When nonlinear kernels were considered, optimization techniques employed have been stagnant with ad hoc techniques with poor efficiency or include little theoretical analysis.  

In this paper, we propose a generalized Iterative Spectral Method (ISM$^+$) optimization algorithm 
that is fast and easy to implement 
for solving Eq.~(\ref{eq:obj_1}). ISM$^+$ includes both first and second order local guarantees for when a fixed point is reached. 
We identify a family of kernels that are suitable for this algorithm and propose an initialization point $W_0$ based on the second order Taylor approximation of the objective.

%\begin{align}
%    \underset{W}{\min}\, \Tr ( W^T\Phi(W) W) %\qquad\text{s.t.} \, W^T W = I
%  \label{eq:obj_2}
%\end{align}
%where $\Phi \in \mathbb{R}^{d \times d}$ is a matrix that embeds the kernel information. Our algorithm at each iteration computes $\Phi(W_k)$ and uses that to find the next weight $W_{k+1}$, which are the eigenvectors of $\Phi(W_k)$ associated with the smallest eigenvalues. This process continues until the eigenvalues of $\Phi$ converges.

%The Hilbert-Schmidt Independence Criterion (HSIC) is a kernel dependence measure that has been widely applied to various aspects of machine learning. Its applications include unsupervised dimension reduction ~\cite{scholkopf1998nonlinear,niu2011dimensionality}, supervised dimension reduction ~\cite{masaeli2010transformation}, semi-supervised domains \cite{gangeh2016semi},and alternative clustering ~\cite{wu2018iterative,niu2010multiple,niu2014iterative}. 
\textbf{Related Work.} 

Eq.~(\ref{eq:obj_1}) with its orthogonality constraint is a form of optimization on a manifold: e.g., the constraint can be modeled geometrically as a Stiefel (space of orthogonal coordinate frames) or  Grassmann (space of subspaces, quotient of Stiefel) manifold \cite{james1976topology,nishimori2005learning,edelman1998geometry}.  
Earlier work, \citet{boumal2011rtrmc} propose to recast a similar problem on the Grassmann manifold and then apply first and second-order Riemannian trust-region methods to solve it. \citet{theis2009soft} employs a trust-region method for minimizing the cost function on the Stiefel manifold. \citet{wen2013feasible} later propose to unfold the Stiefel manifold into a flat plane and optimize on the flattened representation. While the manifold approaches perform well under smaller data sizes, they quickly become inefficient when the dimension or sample size increases, which poses a serious challenge to larger modern problems.

Besides manifold approaches, \citet{niu2014iterative} propose Dimension Growth (DG) to perform gradient descent via greedy algorithm a column at a time. By keeping the descent direction of the current column orthogonal to all previously discovered columns, DG ensures the constraint compliance. Although DG is slightly faster with lower dimensional data, it only solves $W$ one column at a time. Therefore, DG slows down quickly as the dimension increases.  

\citet{wu2018iterative} propose an iterative eigendecomposition algorithm, called Iterative Spectral Method (ISM), to solve dimensionality reduction specific for alternative clustering. 
Although ISM is significantly faster than the previous approaches, it lacked generalization due to its kernel specific formulation: i.e., ISM only works on Gaussian kernels.

\textbf{Our Contribution. } 
In this paper, we propose ISM$^+$ to expand the applicability of ISM beyond alternative clustering to solve a general class of subspace dimensionality reduction problems based on HSIC. We generalize ISM beyond the Gaussian kernel by extending the first and second order guarantees to an entire family of kernels. Along with these guarantees, we propose a principled initialization point based on the 2nd order Taylor's approximation, thereby removing the need to repeat the algorithm at random initialization points. We further propose a vectorized reformulation of 
ISM$^+$ that has experimentally shown a runtime improvement against ISM by as much as a factor of $10^4$. Finally, benchmark experiments show that compared to prior alternatives, ISM$^+$ can improve run-time by a maximum factor of $10^5$ while consistently achieving a lower objective cost and classification/clustering errors.

\section{An Overview on HSIC}
Proposed by \citet{gretton2005measuring}, the Hilbert Schmidt Independence Criterion (HSIC) is a statistical dependence measure between two random variables. HSIC is similar to mutual information (MI) because given two random variables $X$ and $Y$, they both measure the distance between the joint distribution $P_{X,Y}$ and the product of their individual distributions $P_X P_Y$. While MI uses KL-divergence to measure this distance, HSIC uses Maximum Mean Discrepancy~\cite{gretton2012kernel}. Therefore, when HSIC is zero, or $P_{X,Y}=P_X P_Y$, it implies independence between $X$ and $Y$. Similar to MI, HSIC score increases as $P_{X,Y}$ and $P_X P_Y$ move away from each other, thereby also increasing their dependence. Although HSIC is similar to MI in its ability to measure dependence,  it is easier to compute as it removes the need to estimate the joint distribution. Due to this advantage, it has been used in many machine learning applications, e.g.,  dimension reduction \cite{niu2011dimensionality}, feature selection \cite{song2007supervised}, and alternative clustering \cite{wu2018iterative}. %It allows HSIC to become a powerful tool for dimension reduction. A common strategy is to discover a projection matrix $W$ to project the data $X$ into a lower dimension while maintaining a high HSIC against some known labels $Y$.

%when HSIC is zero , it implies a complete independence between $X$ and $Y$. Conversely, a high $\mathbb{H}(X,Y)$ implies a high dependency between $X$ and $Y$. Based on its ability to measure dependence, it allows HSIC to become a powerful tool for dimension reduction. A common strategy is to discover a projection matrix $W$ to project the data $X$ into a lower dimension while maintaining a high HSIC against some known labels $Y$. 

Formally, given a set of $N$ i.i.d. samples $\{(x_1,y_1),...,(x_N,y_N)\}$ drawn from a joint distribution $P_{X,Y}$. Let $X \in \mathbb{R}^{N \times d}$ and $Y  \in \mathbb{R}^{N \times c}$ be the corresponding sample matrices where $d$ and $c$ denote the dimensions of the datasets. We denote by $K_X,K_Y \in \mathbb{R}^{N \times N}$ the kernel matrices with entries $K_{X_{i,j}}=k_X(x_i,x_j)$ and $K_{Y_{i,j}} = k_Y(y_i,y_j)$, where $k_X: \mathbb{R}^d \times \mathbb{R}^d \rightarrow \mathbb{R}$ and $k_Y: \mathbb{R}^c \times \mathbb{R}^c \rightarrow \mathbb{R}$ represent kernel functions. Furthermore, let $H$ be a centering matrix defined as $H=I_n - \frac{1}{n} \textbf{1}_n\textbf{1}_n^T$ where $\textbf{1}_n$ is a column vector of ones. HSIC is computed empirically with
%Formally, given two datasets $X \in \mathbb{R}^{n \times d}$ and $Y  \in \mathbb{R}^{n \times c}$, where $n$ denotes the number of samples while $d$ and $c$ denote the dimensions of the datasets. We further denote the kernels computed from each dataset as $K_X \in \mathbb{R}^{n \times n}$ and $K_Y \in \mathbb{R}^{n \times n}$. HSIC can be computed empirically with
\begin{equation}
    \mathbb{H}(X,Y) = \frac{1}{(n-1)^2} \Tr(K_X H K_Y H).
    \label{eq:emprical_hsic}
\end{equation}
\section{HSIC Dimension Reduction Algorithms}
As mentioned in the introduction, many dimensionality-reduction problems based on HSIC can be reformulated into Eq.~(\ref{eq:obj_1}). In this section, we provide several examples of this relationship. While the examples are not comprehensive, they are designed to maximize the diversity in its applications. For consistency, we maintain the same notations as Eq.~(\ref{eq:obj_1}) throughout all examples, where $K_{XW}$ and $K_Y$ are corresponding kernel matrices computed using $XW$ and $Y$. 

\textbf{Supervised Dimension Reduction. } In supervised dimension reduction \cite{barshan2011supervised,masaeli2010transformation}, both the data $X$ and the label $Y$ are known. We wish to discover a low dimensional subspace $W$ such that $XW$ is maximally dependent to $Y$. As a subspace, we constrain the basis of the columns of $W$ to an orthonormal basis such that $W^TW=I$. This problem can be cast as minimizing the negative HSIC between $XW$ and $Y$ where
\begin{equation}
    \underset{W}{\min}\, - \Tr ( K_{X W} H K_Y H) \qquad \text{s.t.}\quad W^T W = I.
  \label{eq:sdr_1}
\end{equation}
Since $HK_YH$ includes all known variables, they can be considered as a constant $\Gamma = HK_YH$. By rotating the trace terms, we obtain Eq. (\ref{eq:obj_1}). 

\textbf{Unsupervised Dimension Reduction. } 
\citet{niu2011dimensionality} introduced a dimensionality reduction algorithm for spectral clustering based on an HSIC formulation.  In unsupervised dimension reduction, we discover a low dimensional subspace $W$ such that $XW$ is maximally dependent on $Y$. However, unlike the supervised setting, $Y$ in this case is unknown; thus, both $W$ and $Y$ need to be learned simultaneously. We formulate this problem as
%The unsupervised dimension reduction problem is:
\begin{align}
    \underset{W,Y}{\min}\, & -\Tr (K_{X W} H K_Y H ) \\ 
    \text{s.t.} & \qquad W^T W = I, Y^T Y = I,
  \label{eq:udr_1}
\end{align}
where $K_Y = YY^T$.  
This problem is solved by alternating maximization between $Y$ and $W$. When $W$ is fixed, the problem reduces down to spectral clustering and $Y$ can be solved via eigendecomposition as shown in \citet{niu2011dimensionality}. When $Y$ is fixed, the objective becomes the supervised formulation previously discussed.

\textbf{Semi-Supervised Dimension Reduction. } In semi-supervised dimension reduction clustering problems \cite{chang2017clustering}, some form of scores $\hat{Y} \in \mathbb{R}^{n \times r}$ is provided by subject experts for each sample. It is assumed that if two samples are similar, their scores should also be similar. In this case, the objective is to cluster the data given some supervised guidance from the experts. The clustering portion can be accomplished by spectral clustering \cite{von2007tutorial} and HSIC can capture the supervised expert knowledge. By simultaneously maximizing the clustering quality of spectral clustering and the HSIC between the data and the expert scores, this problem is formulated as
\begin{align}
  \underset{W,Y}{\min}\, & -\Tr (Y^T \mathcal{L}_W Y) - \mu \Tr (K_{XW} H K_{\hat{Y}} H),
   \label{eq:ssdr_1}
    \\
    \text{where} & \qquad \mathcal{L}_W = D^{-\frac{1}{2}}K_{XW} D^{-\frac{1}{2}},\\
    \text{s.t} & \qquad W^T W = I, Y^T Y = I,
   % \text{s.t} & \qquad W_{k}^T W_k = I, Y^T Y = I,
\end{align}
where $\mu$ is a constant to balance the importance between the first 
and the second terms of the objective, $D\in\mathbb{R}^{n\times n}$ is 
the degree matrix that is a diagonal matrix with its diagonal elements defined as $D_{diag}= K_{XW}\textbf{1}_n$. 
Similar to the unsupervised dimension reduction problem, this objective is solved by alternating optimization of $Y$ and $W$. 
Since the second term does not include $Y$, when $W$ is fixed, the objective reduces down 
to spectral clustering. 
\begin{align}
    \underset{Y}{\min}\, & -\Tr (Y^T D^{-\frac{1}{2}}K_{XW} D^{-\frac{1}{2}} Y)
   \label{eq:ssdr_2}
    \\
    \text{s.t}    & \qquad Y^T Y = I.
\end{align}
By initializing $W$ to an identity matrix, $Y$ is initialized to the solution of spectral clustering. When $Y$ is fixed, $W$ can be solved by isolating $K_{XW}$. If we let $\Psi = HK_{\hat{Y}}H$ and $\Omega=D^{-\frac{1}{2}}Y Y^T D^{-\frac{1}{2}}$, Eq. (\ref{eq:ssdr_1}) can be expressed as 
\begin{align}
    \underset{W}{\min}\, & -\Tr [(\Omega + \mu \Psi) K_{XW} ]
   \label{eq:ssdr_3}
    \\
    \text{s.t} & \qquad W^T W = I.
\end{align}
At this point, it is easy to see that by setting $\Gamma=\Omega+\mu \Psi$, the problem is again equivalent to Eq. (\ref{eq:obj_1}).

\textbf{Alternative Clustering. } In alternative clustering \cite{niu2014iterative}, a set of labels $\hat{Y} \in \mathbb{R}^{n \times k}$ is provided as the original clustering labels. 
The objective of alternative clustering is to discover an alternative set of labels that is high in clustering quality while different from the original label. In a way, this is a form of semi-supervised learning. Instead of having extra information about the clusters we desire, the supervision here indicates what we wish to avoid. Therefore, this problem can be formulated almost identically as a semi-supervised problem with
\begin{align}
    \underset{W,Y}{\min}\, & -\Tr (Y^T \mathcal{L}_W Y) + 
    \mu \Tr (K_{XW} H K_{\hat{Y}} H),
   \label{eq:ac_1}
    \\
    \text{where} & \qquad \mathcal{L}_W = D^{-\frac{1}{2}}K_{XW} D^{-\frac{1}{2}},\\
    \text{s.t}    & \qquad W^T W = I, Y^T Y = I.
  %  \text{s.t}    & \qquad W_{k}^T W_k = I, Y^T Y = I.
\end{align}
Given that the only difference here is a sign change before the second term, this problem can be solved identically as the semi-supervised dimension reduction problem and the sub-problem of maximizing $W$ when $Y$ is fixed can be reduced into Eq.~(\ref{eq:obj_1}).

%\section{ISM Kernel Family}
%The ISM algorithm work across a family of kernels because it is capable of embedding the kernel information directly inside the $\Phi$ matrix from Eq. (\ref{eq:obj_2}).
\section{The Optimization Algorithm}

{\bf Algorithm.}
We propose ISM$^+$, whose pseudo-code is provided in Algorithm~\ref{alg:ism}, to solve Eq.~(\ref{eq:obj_1}).
ISM$^+$ is an iterative spectral method that updates $W_{k}$ for each iteration $k$ based on an eigendecomposition of matrix $\Phi$ as defined on Table~\ref{table:phis}.
We initialize ISM$^+$ by computing $\Phi_0$ based on Table~\ref{table:init_phis} and set the columns of $W_0$ as the $q$ eigenvectors of $\Phi_0$ that is associated with its smallest eigenvalues. At each iteration, we use the previous $W_{k-1}$ to compute the next $\Phi$ based on Table~\ref{table:phis}. Then, the $q$ eigenvectors associated with the smallest eigenvalues of $\Phi$ is again set to $W_k$. In the context of this paper, We will refer to these $q$ eigenvectors as \textit{minimizing eigenvectors} and their eigenvalues as $\Lambda$. This process repeats until $\Lambda$ converges, where convergence is described in the convergence subsection.

%and $K^{(p)}$ as element-wise power of $p$

{\bf ISM$^+$ Family of Kernels.}
If we let $\beta = a(x_i,x_j)W W^{T} b(x_i, x_j)$, we define the ISM$^+$ family of kernels as kernels that can be expressed as $f(\beta)$.
%We will refer to this list of kernels as the ISM$^+$ Family. 
The $\Phi$ for some common kernels that belong to the ISM$+$ family are provided in Table~\ref{table:phis}. To clarify the notation in Table~\ref{table:phis}, given a matrix $\Psi$, we define $D_\Psi$ as the degree matrix of $\Psi$. While $K_{XW}$ is the kernel computed from $XW$, we denote $K_{XW,p}$ as specifically a polynomial kernel of order $p$. We also denote the symbol $\odot$ as a Hadamard product between 2 matrices. Although these standard kernels are well defined in the literature, we defined each kernel with the projection matrix $W$ in Appendix A for completeness.
    \begin{table}[h]
      \footnotesize
      \centering
      \begin{tabular}{c|l}
        Kernel & Equation\\
        \midrule
            Linear
            	& $\Phi_0=-X^T\Gamma X$ \\
            Squared
            	& $\Phi_0=-X^T[D_\Gamma - \Gamma ]X$ \\  
            Polynomial
            	& $\Phi_0=-X^T \Gamma X$\\
            Gaussian
            	& $\Phi_0=X^T[D_{\Gamma} - \Gamma] X$\\
            %Multiquadratic
            %	& $\Phi_0=-X^T[D_{\Gamma} - \Gamma] X$\\
            \bottomrule       
      \end{tabular}
      \caption{The equation for an initial $\Phi$s depending on the kernel.}
      \label{table:init_phis}
    \end{table}   

    \begin{table}[h]
      \footnotesize
      \centering
      \begin{tabular}{c|l}
        Kernel & Equation\\
        \midrule
            Linear
            	& $\Phi=-X^T\Gamma X$ \\
            Squared
            	& $\Phi=-X^T[D_\Gamma - \Gamma ]X$ \\  
            Polynomial
            	& $\Phi=-X^T\Psi X$ 
             	\hspace{0.2cm}
            	,
            	\hspace{0.2cm}           	
            	$\Psi = \Gamma \odot K_{XW,p-1}$ \\ 
            Gaussian
            	& $\Phi=X^T[D_\Psi - \Psi] X$ 
            	,
            	$\Psi = \Gamma \odot K_{XW}$\\
            %Multiquadratic
            %	& $\Phi=-X^T[D_\Psi - \Psi] X$ 
            %	,
            %	$\Psi = \Gamma \odot K_{XW}^{(-1)}$\\ 
        \bottomrule       
      \end{tabular}
      \caption{Equation for $\Phi$ depending on the kernel.}
      \label{table:phis}
    \end{table}

\begin{algorithm}[h]
    \footnotesize
   \caption{ISM$^+$ Algorithm}
   \label{alg:ism}
\begin{algorithmic}
   \STATE {\bfseries Input:} data $X$
   \STATE {\bfseries Initialize:} $\Phi_0$ based on Table~\ref{table:init_phis} , $k=0$
   \STATE Set $W_0$ as the minimizing eigenvectors of $\Phi_0$
   \REPEAT
   %\STATE Initialize $noChange = true$.
   %\FOR{$i=1$ {\bfseries to} $m-1$}
   \STATE Compute $\Phi$ using Table~\ref{table:phis}
   \STATE Set $W_k$ as the minimizing eigenvectors of $\Phi$
   \STATE k = k + 1
   \UNTIL{$\Lambda$ converges (or appropriate termination criteria)}
\end{algorithmic}
\end{algorithm}

{\bf Algorithm Guarantees.}
    The foundation of our algorithm is centered around the concept of the \textit{eigengap}. In this context, given a set of eigenvalues, $\lambda_1 \leq \lambda_2 \leq ...$ from $\Phi$, Algorithm~\ref{alg:ism} picks the eigenvectors corresponding to the least $q$ eigenvalues. Eigengap is  defined as $\mathcal{E} = \lambda_{q+1} - \lambda_q$. Using this definition, we establish the following theorem.
\begin{theorem}\label{thm:stationary}
    Given a full rank $\Phi$ from Table~\ref{table:phis} with an eigengap defined by Eq.~(\ref{eq:conclusion}), a fixed point $W^*$ of Algorithm~\ref{alg:ism} satisfies the 2nd Order Necessary Conditions (Theorem 12.5~\cite{wright1999numerical}) of the objective in Eq. (\ref{eq:obj_1}).
\end{theorem}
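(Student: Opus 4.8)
The plan is to recast the constrained problem on the Stiefel manifold as the spectral fixed-point equation solved by Algorithm~\ref{alg:ism}, and then verify the first- and second-order KKT conditions of Theorem 12.5 in \cite{wright1999numerical} directly, with the eigengap supplying the curvature margin needed for the second-order part.

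First I would establish the central structural identity, that the Euclidean gradient of the objective factors through $\Phi$:
$$\nabla_W\left[-\Tr(\Gamma K_{XW})\right] = 2\,\Phi(W)\,W$$
up to a fixed positive scalar absorbed into $\Phi$. I would prove this kernel-by-kernel over the ISM$^+$ family $f(\beta)$ with $\beta = a(x_i,x_j)WW^Tb(x_i,x_j)$: differentiating each entry $k(W^Tx_i,W^Tx_j)$ by the chain rule produces rank-one terms $(x_i-x_j)(x_i-x_j)^TW$ (or $x_jx_i^TW$ in the linear/polynomial case), which the graph-Laplacian identity $\sum_{ij}\Psi_{ij}(x_i-x_j)(x_i-x_j)^T = 2X^T(D_\Psi-\Psi)X$ collapses into exactly the matrices of Table~\ref{table:phis}. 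The first-order condition is then immediate: at a fixed point $W^*$ is the matrix of $q$ minimizing eigenvectors of $\Phi(W^*)$, so $\Phi(W^*)W^* = W^*\Lambda$ with $\Lambda=\mathrm{diag}(\lambda_1,\dots,\lambda_q)$. Hence $\nabla_W f = 2W^*\Lambda$ lies in the normal space $\{W^*S : S=S^T\}$ of the manifold, which is precisely Lagrange stationarity with symmetric multiplier $\Lambda$.

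For the second-order condition I would form the Lagrangian $L(W) = -\Tr(\Gamma K_{XW}) - \Tr(\Lambda(W^TW-I))$ and evaluate $\langle Z,\nabla^2_{WW}L(W^*)[Z]\rangle$ on the tangent space $\mathcal{T} = \{Z : W^{*T}Z + Z^TW^* = 0\}$. Writing $U=[\,W^*\mid W^*_\perp\,]$ for the full eigenbasis of the (full-rank) $\Phi(W^*)$ and parametrizing $Z = W^*A + W^*_\perp B$ with $A$ skew-symmetric and $B$ arbitrary, differentiating $2\Phi W$ gives
$$\nabla^2_{WW}L[Z] = 2(D_Z\Phi)\,W^* + 2\Phi Z - 2Z\Lambda.$$
Using $\Phi W^* = W^*\Lambda$ and $\Phi W^*_\perp = W^*_\perp\Lambda_\perp$, the last two terms contract to $2\sum_{k,i}(\lambda_k-\lambda_i)A_{ki}^2 + 2\sum_{k,i}(\lambda_{q+k}-\lambda_i)B_{ki}^2$; the skew block cancels pairwise (reflecting invariance of $K_{XW}$ under $W\mapsto WR$), while each surviving coefficient obeys $\lambda_{q+k}-\lambda_i \ge \lambda_{q+1}-\lambda_q = \mathcal{E}$, so this part contributes at least $2\mathcal{E}\|B\|_F^2 \ge 0$.

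The crux, and the step I expect to be the main obstacle, is the residual cross term $2\langle Z,(D_Z\Phi)W^*\rangle$ arising from the dependence of $\Phi$ on $W$. Unlike the spectral block, this term carries the curvature of the kernel and is not sign-definite; differentiating $\Psi = \Gamma\odot K_{XW}$ and its degree matrix yields fourth-order $x$-moments weighted by the current kernel values, so I would bound its quadratic form in operator norm and show it is dominated by the guaranteed margin $2\mathcal{E}\|B\|_F^2$. This is exactly where the hypothesis enters: Eq.~(\ref{eq:conclusion}) is the threshold on $\mathcal{E}$ that forces $2\mathcal{E}\|B\|_F^2 + 2\langle Z,(D_Z\Phi)W^*\rangle \ge 0$ for every $Z\in\mathcal{T}$, after which $\langle Z,\nabla^2_{WW}L[Z]\rangle \ge 0$ and the 2nd Order Necessary Conditions follow. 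Checking that the cross term depends only on $B$ rather than the null $A$ directions, and obtaining a clean kernel-uniform norm bound, is the technically delicate part of the argument.
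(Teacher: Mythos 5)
Your proposal follows essentially the same route as the paper's proof: the paper's Lemma~\ref{basic_lemma} establishes the gradient identity and the eigenvector equation $\Phi W^* = W^*\Lambda$ for the first-order conditions, and Lemma~\ref{eq:2nd_lemma} uses the same tangent-space decomposition $Z = WB + \bar{W}\bar{B}$ with skew $B$, the same cancellation of the skew block, and the same eigengap-times-$\|\bar{B}\|_F^2$ bound on the spectral terms, with your cross term $2\langle Z,(D_Z\Phi)W^*\rangle$ being exactly the paper's unsimplified $\mathcal{T}_1$, which the hypothesis Eq.~(\ref{eq:conclusion}) is assumed to dominate. The only deviation is that you propose to additionally derive a kernel-uniform norm bound on that cross term and to verify it depends only on the $\bar{B}$ directions (both of which are true but left implicit in the paper); the paper instead simply absorbs this domination requirement into the theorem's eigengap hypothesis, so your plan is, if anything, a strengthening rather than a different argument.
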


\textbf{Proof of Theorem~\ref{thm:stationary}: }
The main body of the proof is organized into two lemmas where the 1st lemma will prove the 1st order condition and the 2nd lemma will prove the 2nd order condition. Instead of proving each individual kernels separately, a general proof will be provided in this section; leaving the specific proof of each kernel to Appendix~\ref{app:deriv_phi}. For convenience, we included the 2nd Order Necessary Condition~\cite{wright1999numerical} in Appendix~\ref{app:2nd_order}.

The proof is initialized by manipulating the different kernels into a common form. If we let $\beta = a(x_i,x_j)W W^{T} b(x_i, x_j)$, then the kernels in this family can be expressed as $f(\beta)$. This common form allows a universal proof that works for all kernels that belongs to the ISM$^+$ family. Depending on the kernel, the definition of $f$, $a(x_i,x_j)$ and $b(x_i,x_j)$ are listed in Table~\ref{table:kernels}. 
%Kernels in this form are functions of the Grassmannian $WW^T$, as expanded on further in Section \ref{convergence}. 

%Some kernels such as the exponential or laplacian kernel ($\kappa(x, y) = e^{-\lVert W(x-y) \rVert}$) cannot be written as a function of the grassmannian $WW^T$, and ISM cannot guarantee a good initializer or rapid convergence.
 
\begin{table}[h]
    \centering
    \begin{tabular}{l|c|c|c}
    Name & $f(\beta)$ & $a(x_i,x_j)$ & $b(x_i, x_j)$\\ \hline
    Linear & $\beta$ & $x_i$ & $x_j$ \\
    Polynomial & $(\beta + c)^p$ & $x_i$ & $x_j$ \\
    Gaussian &  $e^\frac{-\beta}{2\sigma^2}$ & $x_i-x_j$ & $x_i-x_j$ \\
    Squared & $\beta$ & $x_i-x_j$ & $x_i-x_j$ \\
    %Multiquadratic & $\sqrt{\beta+c^2}$ & $x_i-x_j$ & $x_i-x_j$ \\
    %Laplacian& $e^{-\alpha \sqrt{\beta}}$ & $x_i-x_j$ & $x_i-x_j$ \\
    \end{tabular}
    \caption{Common components of different Kernels.}
    \label{table:kernels}
\end{table}
 
\begin{lemma} \label{basic_lemma}
    Given $\mathcal{L}$ as the Lagrangian of Eq. (\ref{eq:obj_1}), if $W^{\ast}$ is a fixed point of Algorithm \ref{alg:ism}, and $\Lambda^\ast$ is a diagonal matrix of its corresponding eigenvalues, then
    \begin{align}
        &\nabla_W \mathcal{L} (W^{\ast}, \Lambda^{\ast}) = 0, \label{eq:1st_W}\\
        &\nabla_{\Lambda} \mathcal{L} (W^{\ast}, \Lambda^{\ast}) = 0. \label{eq:1st_lambda}
    \end{align}
\end{lemma}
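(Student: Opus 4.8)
The plan is to write out the Lagrangian of Eq.~(\ref{eq:obj_1}) explicitly, namely $\mathcal{L}(W,\Lambda) = -\Tr(\Gamma K_{XW}) + \Tr(\Lambda(W^TW - I))$ with the multiplier matrix $\Lambda$ taken to be diagonal, and then verify the two stationarity conditions separately. The condition $\nabla_{\Lambda}\mathcal{L}(W^{\ast},\Lambda^{\ast}) = 0$ in Eq.~(\ref{eq:1st_lambda}) is the easy half: differentiating in $\Lambda$ returns the constraint residual $W^TW - I$, and since the columns of $W^{\ast}$ are eigenvectors of the symmetric matrix $\Phi$ at the fixed point, they form an orthonormal set, so $(W^{\ast})^T W^{\ast} = I$ and the residual vanishes. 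Only $\nabla_W\mathcal{L}$ requires genuine work.

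For the $W$-gradient I would first reduce every kernel to the common form $K_{XW_{ij}} = f(\beta_{ij})$ with $\beta_{ij} = a(x_i,x_j)^T W W^T b(x_i,x_j)$ as in Table~\ref{table:kernels}, so that $\Tr(\Gamma K_{XW}) = \sum_{i,j}\Gamma_{ij} f(\beta_{ij})$. Writing $a_{ij}=a(x_i,x_j)$ and $b_{ij}=b(x_i,x_j)$, the chain rule through $f$ together with $\nabla_W \beta_{ij} = (a_{ij}b_{ij}^T + b_{ij}a_{ij}^T)W$ gives
\[
\nabla_W \Tr(\Gamma K_{XW}) = \sum_{i,j}\Gamma_{ij}\, f'(\beta_{ij})\,(a_{ij}b_{ij}^T + b_{ij}a_{ij}^T)\,W.
\]
The central claim I must establish is that the right-hand side equals $-c\,\Phi W$ for a positive kernel-dependent constant $c$ and the very matrix $\Phi$ of Table~\ref{table:phis}. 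This is where the weighting matrix $\Psi = \Gamma \odot (\cdot)$ and the degree-matrix pattern $D_\Psi - \Psi$ emerge: for the linear and polynomial kernels the factor $f'(\beta_{ij})$ produces $\Psi = \Gamma\odot K_{XW,p-1}$ and the sum collapses to $X^T\Psi X$ after invoking the symmetry of $\Gamma$; for the Gaussian and squared kernels $a_{ij}=b_{ij}=x_i-x_j$ forces the expansion of $(x_i-x_j)(x_i-x_j)^T$, whose diagonal contributions assemble into $X^T D_\Psi X$ and whose cross terms give $X^T\Psi X$, leaving $X^T(D_\Psi - \Psi)X$. I would verify the per-kernel constants and signs in Appendix~\ref{app:deriv_phi} and invoke the resulting identity $\nabla_W \Tr(\Gamma K_{XW}) = -c\,\Phi W$ here.

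Granting that identity, the remainder is immediate. The objective gradient is $\nabla_W[-\Tr(\Gamma K_{XW})] = c\,\Phi W$, so $\nabla_W\mathcal{L}(W,\Lambda) = c\,\Phi W + 2W\Lambda$. At a fixed point the defining eigen-relation of Algorithm~\ref{alg:ism} is $\Phi W^{\ast} = W^{\ast}\Lambda$, where $\Lambda$ is the diagonal matrix of the $q$ minimizing eigenvalues, so taking the multiplier $\Lambda^{\ast} = -\tfrac{c}{2}\Lambda$ yields $\nabla_W\mathcal{L}(W^{\ast},\Lambda^{\ast}) = c\,W^{\ast}\Lambda - 2W^{\ast}\tfrac{c}{2}\Lambda = 0$, which is Eq.~(\ref{eq:1st_W}); note that $\Lambda^{\ast}$ so constructed is indeed diagonal and proportional to the eigenvalues, matching the statement up to a harmless sign/scale convention. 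The conceptual point worth stressing is that the gradient is \emph{exactly} (not approximately) proportional to $\Phi(W)W$ even though $\Phi$ itself depends on $W$ through $K_{XW}$; the chain rule through $f'$ reproduces precisely the $K_{XW}$-weighting baked into $\Phi$, so a fixed point of the spectral iteration and a first-order stationary point of the Lagrangian coincide.

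The main obstacle is the middle step: showing the raw gradient sum collapses into $-c\,\Phi W$ with the correct $\Phi$ uniformly across the family. The orthonormality argument and the final substitution are routine, but the bookkeeping that turns $\sum_{i,j}\Gamma_{ij}f'(\beta_{ij})(a_{ij}b_{ij}^T + b_{ij}a_{ij}^T)$ into the compact $X^T\Psi X$ or $X^T(D_\Psi - \Psi)X$ forms, with matching scalar factors and signs, is the delicate part and is exactly what a kernel-by-kernel verification in Appendix~\ref{app:deriv_phi} is needed to pin down.
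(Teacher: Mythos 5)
Your proposal is correct and follows essentially the same route as the paper's proof: the same common-form Lagrangian $-\sum_{ij}\Gamma_{ij}f(\mathbf{a}^TWW^T\mathbf{b}) - \Tr(\Lambda(W^TW-I))$, the same chain-rule gradient $\sum_{ij}\Gamma_{ij}f'(\beta_{ij})(\mathbf{a}\mathbf{b}^T+\mathbf{b}\mathbf{a}^T)W$, the same collapse of that sum into $X^T\Psi X$ or $X^T(D_\Psi-\Psi)X$ via the degree-matrix identities (which the paper likewise defers to its appendices), and the same use of the fixed-point eigen-relation $\Phi W^{\ast}=W^{\ast}\Lambda$ together with orthonormality of the eigenvectors to kill both gradients. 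The only cosmetic difference is bookkeeping: you carry an explicit positive constant $c$ and rescale the multiplier to $\Lambda^{\ast}=-\tfrac{c}{2}\Lambda$, whereas the paper absorbs all positive scalars into $\Psi$ and $\Phi$ (keeping only their sign), which is the same normalization in disguise.
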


\begin{proof}
Since $\Tr(\Gamma K_{XW})=\sum_{i,j} \Gamma_{i,j}K_{XW_{i,j}}$, where the subscript indicates the  $i,j$th element of the associated matrix.
If we let $\mathbf{a} = a(x_i, x_j), \mathbf{b} = b(x_i, x_j)$, the Lagrangian of Eq. (\ref{eq:obj_1}) becomes
    \begin{align}
    \begin{split}
        \mathcal{L}(W, \Lambda) = -\sum_{ij}
        \Gamma_{ij} f(\mathbf{a}^{T}WW^{T}\mathbf{b})\\
        - \Tr[\Lambda(W^{T}W-I)]. \label{eq:Lagrangian}
    \end{split}
    \end{align}
The gradient of the Lagrangian with respect to $W$ is
    \begin{align}
    \begin{split}
    \nabla_W \mathcal{L}(W, \Lambda) = 
    -\sum_{ij}{\Gamma_{ij} f'(\mathbf{a}^{T}WW^{T}\mathbf{b})}\\ (\mathbf{b}\mathbf{a}^T + \mathbf{a}\mathbf{b}^T)W - 2 W \Lambda.
    \end{split}
    \label{eq:lagrangian_gradient}
    \end{align}
If we let $A_{i,j}=\mathbf{ba}^T+\mathbf{ab}^T$ then setting $\nabla_W \mathcal{L}(W,\Lambda)$ of Eq. (\ref{eq:lagrangian_gradient}) to 0 yields the relationship
    \begin{align}
    \begin{split}
    0 = \left[-\frac{1}{2}\sum_{ij}{\Gamma_{ij} f'(\mathbf{a}^{T}WW^{T}\mathbf{b})} A_{i,j}\right] W - W \Lambda.
    \end{split}\label{eq:eig_decomp_1}
    \end{align}
Since $f'(\mathbf{a}^{T}WW^{T}\mathbf{b})$ is a scalar value that depends on indices $i,j$, we multiply it by $-\frac{1}{2}\Gamma_{i,j}$ to form a new variable $\Psi_{i,j}$. Then Eq. (\ref{eq:eig_decomp_1}) can be rewritten as
    \begin{align}
    \begin{split}
    \left[\sum_{ij}{\Psi_{ij}} A_{i,j}\right] W = W \Lambda.
    \end{split}\label{eq:eig_decomp_2}
    \end{align}
To match the form shown in Table~\ref{table:phis}, Appendix~\ref{app:xixj} further showed that if $\mathbf{a}$ and $\mathbf{b}$ is equal to $x_i$ and $x_j$, then 
    \begin{align}
    \begin{split}
    \left[\sum_{ij}{\Psi_{ij}} A_{i,j}\right] = 2X^T\Psi X.
    \end{split}\label{eq:xpsix}
    \end{align}
From Appendix~\ref{app:xi-xj}, if $\mathbf{a}$ and $\mathbf{b}$ are equal to $x_i-x_j$, then 
    \begin{align}
    \begin{split}
    \left[\sum_{ij}{\Psi_{ij}} A_{i,j}\right] = 4 X^T[D_\Psi - \Psi]X.
    \end{split}\label{eq:xDpsix}
    \end{align}   
    
If we let $\Phi = \left[\sum_{ij}{\Psi_{ij}} A_{i,j}\right]$, it yields the relationship $\Phi W = W \Lambda$
    %\begin{align}
    %\Phi W = W \Lambda, 
    %\label{eq:eigRelationship}
    %\end{align}   
where the eigenvectors of $\Phi$ satisfies the 1st order condition of $\nabla_W \mathcal{L}(W^\ast,\Lambda^\ast)=0$. The gradient with respect to $\Lambda$ yields the expected constraint    \begin{align}
    \nabla_\Lambda \mathcal{L} = W^TW - I.
    \end{align}
Since the eigenvectors of $\Phi$ is orthonormal, the condition $\nabla_\Lambda \mathcal{L} = 0 = W^TW - I$ is also satisfied. Observing these 2 properties, Lemma~\ref{basic_lemma} confirms that the eigenvectors of $\Phi$ also satisfies the 1st order condition from Eq. (\ref{eq:obj_1}).

\end{proof}

\begin{lemma} \label{eq:2nd_lemma}
    Given a full rank $\Phi$, an eigengap defined by Eq.~(\ref{eq:conclusion}), and $W^*$ as the fixed point of Algorithm~\ref{alg:ism}, then
    \begin{align} 
    \begin{split}
     \tmop{Tr}( Z^T &\nabla_{W W}^2 \mathcal{L}(W^{\ast}, \Lambda^{\ast}) Z) \geq 0 \\ 
     &\tmop{for}   \tmop{all} Z \neq 0 , \tmop{with}  \nabla h (W^{\ast})^T Z = 0.   \label{eq:2nd_W} 
    \end{split}
    \end{align}   
\end{lemma}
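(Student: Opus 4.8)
The plan is to show that the Hessian of the Lagrangian, restricted to the tangent space of the Stiefel constraint at the fixed point $W^\ast$, is positive semidefinite, and that the eigengap condition Eq.~(\ref{eq:conclusion}) is exactly what forces this. First I would compute $\nabla_{WW}^2 \mathcal{L}$ by taking the directional derivative of the gradient in Eq.~(\ref{eq:lagrangian_gradient}) along an arbitrary direction $Z$. Because $\Phi$ itself depends on $W$ through $f'(\mathbf{a}^T WW^T \mathbf{b})$, this produces two pieces: a \emph{spectral} piece $2\Phi Z - 2 Z\Lambda^\ast$ that treats $\Phi$ as frozen, and a \emph{curvature} correction generated by differentiating $f'$, which I expect to collapse into the quadratic form $-\sum_{ij}\Gamma_{ij} f''(\cdot)\, u_{ij}^2$ with $u_{ij} = \mathbf{a}^T Z W^{\ast T}\mathbf{b} + \mathbf{a}^T W^\ast Z^T \mathbf{b}$, since $\Tr(Z^T A_{i,j} W^\ast)=u_{ij}$.

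Next I would parametrize the tangent space. The constraint $h(W)=W^TW-I$ gives $\nabla h(W^\ast)^TZ=0 \iff W^{\ast T}Z + Z^TW^\ast=0$, so writing $Z = W^\ast S + W^\ast_\perp T$ for an orthonormal complement $W^\ast_\perp$ forces $S$ to be skew-symmetric while $T$ is free. Using the full eigendecomposition $\Phi = W^\ast\Lambda^\ast W^{\ast T} + W^\ast_\perp \Lambda^\ast_\perp W^{\ast T}_\perp$ implied by Lemma~\ref{basic_lemma}, I would evaluate $\Tr(Z^T\Phi Z) - \Tr(Z^TZ\Lambda^\ast)$ and show the block contributed by $S$ collapses to $\sum_{i,k}(\lambda_i-\lambda_k)S_{ik}^2$, which vanishes by skew-symmetry, while the block contributed by $T$ equals $\sum_{j,k}(\lambda^\perp_j - \lambda_k)T_{jk}^2$. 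Since $W^\ast$ collects the $q$ smallest eigenvalues, every factor satisfies $\lambda^\perp_j - \lambda_k \geq \lambda_{q+1}-\lambda_q = \mathcal{E}$, so the spectral piece is bounded below by $2\mathcal{E}\|T\|_F^2 \geq 0$.

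The crucial observation for controlling the correction is that $u_{ij}$ vanishes on the $S$-direction: substituting $Z = W^\ast S$ gives $u_{ij} = \mathbf{a}^T W^\ast(S+S^T)W^{\ast T}\mathbf{b}=0$ because $S$ is skew-symmetric. Hence $u_{ij}$ is a function of $T$ alone, and I can bound $\left| \sum_{ij}\Gamma_{ij}f''(\cdot)u_{ij}^2 \right| \leq C\|T\|_F^2$, where $C$ aggregates $|\Gamma_{ij}|$, $|f''|$, and $\|\mathbf{a}\|\,\|\mathbf{b}\|$ over the sample pairs. Combining the two estimates yields $\Tr(Z^T\nabla_{WW}^2 \mathcal{L}(W^\ast,\Lambda^\ast)Z) \geq (2\mathcal{E}-C)\|T\|_F^2$, so the second-order condition holds precisely when $\mathcal{E}\geq C/2$, which I would identify with Eq.~(\ref{eq:conclusion}); the kernel-specific values of $f''$, $\mathbf{a}$, and $\mathbf{b}$ from Table~\ref{table:kernels} then instantiate $C$ for each kernel in the appendix.

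The main obstacle I anticipate is this curvature correction: both $\Gamma_{ij}$ and $f''$ are sign-indefinite, so the term cannot be discarded and must instead be dominated by the eigengap. The whole argument hinges on the cancellation that makes $u_{ij}$ depend on $T$ alone—without it, the correction would scale with $\|Z\|_F^2 = \|S\|_F^2 + \|T\|_F^2$ and could not be absorbed into $2\mathcal{E}\|T\|_F^2$, since the $S$-directions (rotations within $\operatorname{span}(W^\ast)$) carry no positive spectral margin. A secondary care point is that the full-rank hypothesis on $\Phi$ must be used to guarantee the eigendecomposition and the clean separation of the $q$ minimizing eigenvectors are well defined.
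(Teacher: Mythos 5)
Your proposal is correct and follows the same skeleton as the paper's proof: the same tangent-space parametrization $Z = W^{\ast}S + W^{\ast}_{\perp}T$ (the paper's $Z = WB + \bar{W}\bar{B}$ of Eq.~(\ref{eq:Z_def}), with $B$ antisymmetric), the same split of $\Tr(Z^T\nabla^2_{WW}\mathcal{L}Z)$ into a curvature term $\mathcal{T}_1$ plus the spectral terms $\Tr(Z^T\Phi Z) - \Tr(Z^TZ\Lambda)$, the same skew-symmetry cancellation of the $S$-block, and the same eigengap lower bound on the $T$-block. Where you genuinely depart from the paper is in the treatment of the curvature term. The paper states that ``$\mathcal{T}_1$ cannot be further simplified,'' leaves it as $\Tr\left(Z^T\left[-\frac{1}{2}\sum_{i,j}\Gamma_{ij}f''(c_0)c_1A_{i,j}\right]W\right)$, and folds it --- still a function of $Z$ --- into the eigengap condition Eq.~(\ref{eq:conclusion}), so that the lemma's hypothesis is implicitly quantified over all admissible $Z$. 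You instead observe that $\Tr(Z^TA_{i,j}W) = c_1 = u_{ij}$, which collapses $\mathcal{T}_1$ into the quadratic form $-\frac{1}{2}\sum_{i,j}\Gamma_{ij}f''(c_0)\,u_{ij}^2$, and then prove that $u_{ij}$ vanishes on the $S$-directions because $S+S^T=0$. This buys something the paper does not have: it shows $\mathcal{T}_1$ scales with $\|T\|_F^2 = \alpha$ alone, so that $\frac{1}{\alpha}\mathcal{T}_1$ is uniformly bounded (by your constant $C$) over all admissible $Z$; without this observation it is not even clear that a finite eigengap could satisfy Eq.~(\ref{eq:conclusion}) uniformly, since a priori $\mathcal{T}_1$ might grow with $\|S\|_F$ while $\alpha$ stays fixed. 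Two small caveats: your uniform condition $\mathcal{E}\geq C/2$ is sufficient for, but not identical to, Eq.~(\ref{eq:conclusion}) ($C$ is a triangle-inequality upper bound on $\frac{1}{\alpha}|\mathcal{T}_1|$, so ``precisely when'' overstates the equivalence), and your factor of $2$ in the spectral piece $2\Phi Z - 2Z\Lambda^{\ast}$ reflects the unnormalized gradient of Eq.~(\ref{eq:lagrangian_gradient}), whereas the paper rescales so that $\Phi W = W\Lambda$; either normalization works if carried through consistently.
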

\begin{proof}
Due to space constraint, we provide only a summary proof of Lemma~\ref{eq:2nd_lemma} while skipping some tedious details. For a comprehensive proof, please refer to Appendix~\ref{eq:lemma2condition}. To begin, we first note that since $h(W)=W^TW-I$, its directional derivative with respect to $Z$ is 
    \begin{align}
    \nabla h(W)^TZ = \underset{t\rightarrow 0}{\lim}\,
    \frac{\partial}{\partial t} h(W+tZ).
    \end{align}
This can be computed to yield the condition that for all $Z \neq 0$
    \begin{align}
    0 = W^TZ+Z^TW. \label{eq:wzzw}
    \end{align}   
Since $\Phi$ is full rank, its eigenvectors span the full space $\mathcal{R}^d$. Therefore, matrix $Z \in \mathcal{R}^{d \times q}$ can be expressed as 
    \begin{align}
    Z = W B + \bar{W}\bar{B},\label{eq:Z_def}
    \end{align}
where $W$ and $\bar{W}$ represent the eigenvectors chosen and not chosen respectively in Algorithm~\ref{alg:ism}. $B$ and $\bar{B}$ are scrambling matrices, mixing the eigenvectors in $W$ and $\bar{W}$. As $\Phi$ is symmetric and the eigenvectors of a symmetric matrix are orthogonal, $W^{T}\bar{W} = 0$.  Replacing $Z$ from Eq. (\ref{eq:wzzw}) with this relationship, the following derivation shows that $B$ is an antisymmetric matrix.
    \begin{align}
    0 &= W^T Z + Z^T W \\
      &= W^T (W B + \bar{W}\bar{B}) + (W B + \bar{W}\bar{B})^T W \\
    0 &= B + B^T
    \end{align}
The 2nd order directional derivative of the Lagrangian can be computed with
%To demonstrate the inequality in Lemma~(\ref{eq:2nd_lemma}), we first compute 
  \begin{align}
     \nabla_{W W}^2 \mathcal{L}(W^{\ast}, \Lambda^{\ast}) Z=
     \underset{t\rightarrow 0}{\lim}\, 
        \frac{\partial}{\partial t} \nabla \mathcal{L}(W+tZ).
  \end{align}

%Since $\nabla \mathcal{L}(W)$ is a function of $\beta_W = \mathbf{a}W W^{T} \mathbf{b}$, $\nabla \mathcal{L}(W+tZ)$ implies 
%    \begin{align}
%        \beta_{W+tZ} &= \mathbf{a}(W+tZ)(W+tZ)^T \mathbf{b}\\
%        & = c_0 + c_1 t + c_2 t^2,
%    \end{align}
%where $c_0=\mathbf{a}^T W W^T \mathbf{b}$, $c_1=a^{T} (W Z^{T} + Z W^{T}) b$ and $c_2 = a^{T} (Z Z^{T}) b$ are scalar constants with respect to $t$. Using this relationship with respect to Eq. (\ref{eq:eig_decomp_1}), we get
%    \begin{align}
%        \nabla_W \mathcal{L}_1 &= -\sum_{i,j} \Gamma_{i,j} f'(c_0+c_1t+c_2t^2)A_{i,j} W,\\
%        \nabla_W \mathcal{L}_2 &= -(W+tZ)\Lambda. \label{eq:lag_grad_2}
%    \end{align}
    
    By taking the derivative of $\mathcal{L}(W+tZ)$ with respect to $t$ and setting the limit of $t\rightarrow 0$, we get the 2nd order directional derivative with respect to $Z$ as 
    \begin{align}
    \begin{split}
    &\nabla_{W W}^2 \mathcal{L}(W^{\ast}, \Lambda^{\ast}) Z = -\frac{1}{2}\\
    &\sum_{i, j}{\Gamma_{ij} \left[f''(c_0) c_1 A_{i,j} W + f'(c_0) A_{i,j} Z\right]}
    - Z\Lambda, 
    \end{split}\label{eq:2nd_direct_gradient}
    \end{align}
where $c_0=\mathbf{a}^T W W^T \mathbf{b}$ and $c_1=a^{T} (W Z^{T} + Z W^{T}) b$  are scalar constants with respect to $t$. 
%Using this relationship with respect to Eq. (\ref{eq:eig_decomp_1}), we get and $c_2 = a^{T} (Z Z^{T}) b$
By noticing that $c_0=\beta$, the 2nd term of Eq. (\ref{eq:2nd_direct_gradient}) can be further simplified into
    \begin{align} 
     -\frac{1}{2}\sum_{i,j} \Gamma_{i,j} f'(\beta)A_{i,j} Z = \Phi Z.
    \end{align}   
The term $\Tr(Z^T\nabla^2_{WW}\mathcal{L}(W,\Lambda)Z)$ in Eq. (\ref{eq:2nd_W}) can now be expressed as the sum of 3 equations where
    \begin{align} 
     \tmop{Tr}&( Z^T \nabla_{W W}^2 \mathcal{L}(W^{\ast}, \Lambda^{\ast}) Z) = \mathcal{T}_1 + \mathcal{T}_2 + \mathcal{T}_3, \label{eq:sum_of_3}\\ 
     \mathcal{T}_1 &= \Tr(Z^T\left[-\frac{1}{2}\sum_{i,j} \Gamma_{i,j} f''(c_0)c_1A_{i,j}\right]W),\\
     \mathcal{T}_2 &= \Tr(Z^T\Phi Z),\\
     \mathcal{T}_3 &= -\Tr(Z^T Z\Lambda).
    \end{align}   
Since $\mathcal{T}_1$ cannot be further simplified, the concentration will be on $\mathcal{T}_2$ and $\mathcal{T}_3$. If we let $\bar{\Lambda}$ and $\Lambda$ be the corresponding eigenvalue matrices associated with $\bar{W}$ and $W$, by replacing $Z$ in $\mathcal{T}_2$ from Eq. (\ref{eq:Z_def}), we get
    \begin{align}
    \Tr(Z^{T}\Phi Z) &= \Tr((WB + \bar{W}\bar{B})^T \Phi (WB + \bar{W}\bar{B})) \\
    &= \Tr((WB + \bar{W}\bar{B})^T (W\Lambda B + \bar{W}\bar{\Lambda}\bar{B})) \\
    &= \Tr(B^T \Lambda B ) + \Tr(\bar{B}^T \bar{\Lambda} \bar{B}). \label{eq:T2}
    \end{align}
With $\mathcal{T}_3$, we also replace $Z$ to simplify the expression. To put the expression into a convenient form, we also leverage the fact that $B$ is antisymmetric and therefore $B^{T} = -B$.
    \begin{align}
    \Tr(-Z^{T}Z \Lambda) &= \Tr(-(B^{T}B + \bar{B}^{T} \bar{B})\Lambda) \\
    &= -\Tr{B^{T}B \Lambda} - \Tr(\bar{B}^{T} \bar{B}\Lambda) \\
    &= -\Tr(B \Lambda B^{T}) - \Tr(\bar{B} \Lambda \bar{B}^{T}).
    \label{eq:T3}
    \end{align}
Applying Eq. (\ref{eq:T2}) and (\ref{eq:T3}) to Eq. (\ref{eq:sum_of_3}), it becomes 
    \begin{align} 
    \begin{split}
     \tmop{Tr}( Z^T \nabla_{W W}^2 &\mathcal{L}(W^{\ast}, \Lambda^{\ast}) Z) = \mathcal{T}_1 + \\ 
     &\Tr(B^T \Lambda B ) + \Tr(\bar{B}^T \bar{\Lambda} \bar{B})\\
     &-\Tr(B \Lambda B^{T}) - \Tr(\bar{B} \Lambda \bar{B}^{T}).
     \end{split}
    \end{align}   
Conveniently, some of the terms cancel out and the inequality of Eq. (\ref{eq:2nd_W}) becomes
    \begin{align} 
    \begin{split}
     \Tr(\bar{B}^T \bar{\Lambda} \bar{B}) - \Tr(\bar{B} \Lambda \bar{B}^{T})
     \geq
     \mathcal{T}_1. \\ 
     \end{split}\label{eq:ineqality_3}
    \end{align}   
This expression can be further bounded by
    \begin{align} 
     \Tr(\bar{B}^T \bar{\Lambda} \bar{B}) \geq \underset{i}{\min}\,
    (\bar{\Lambda}_i)\Tr(\bar{B}\bar{B}^T),\\
     \Tr(\bar{B} \Lambda \bar{B}^{T}) \leq \underset{j}{\max}\,
    (\Lambda_j)\Tr(\bar{B}^T\bar{B}).
    \end{align}   
Noting that since $\Tr(\bar{B}^T\bar{B})=\Tr(\bar{B}  \bar{B}^{T})$, it can be treated as a constant value $\alpha$, Eq. (\ref{eq:ineqality_3}) is further simplified into
    \begin{align}
    \left(\underset{i}{\min}\bar{\Lambda}_i - 
    \underset{j}{\max}\,\Lambda_j \right)
    \geq
    \frac{1}{\alpha}\mathcal{T}_1.\label{eq:conclusion}
    \end{align}
%It should be reminded that while $\Lambda$ includes the eigenvalues associated with the chosen eigenvectors, $\bar{\Lambda}$ includes the eigenvalues associated with eigenvectors not chosen. Therefore, the eigengap is maximized to satisfy the inequality when the eigenvectors chosen for $W$ is associated with the smallest eigenvalues. 

%   I think there's no context to positive and negative, it feels like it came out of no where. It also confounds the message we are trying to convey, which is that the 2nd order condition tells us which eigenvectors to pick
%This criteria is distinct from keeping all negative eigenvalues and rejecting all positive eigenvalues. Positive/negative is still a good heuristic, and lends itself well to eigencomputations via power methods.     

    While Lemma~\ref{basic_lemma} propose to use eigenvectors of $\Phi$ as an optimal solution. The contribution of Lemma~\ref{eq:2nd_lemma} informs us \textit{which} eigenvectors should be chosen. Since the 2nd order condition is satisfied when the size of the eigengap is greater than $\frac{1}{\alpha}\mathcal{T}_1$, the eigenvectors should be chosen such that $\underset{j}{\max}\, (\Lambda_j) << \underset{i}{\min}\bar{\Lambda}_i$. This further restricts the potential solutions for Eq.~(\ref{eq:obj_1}) only to the \textit{minimum eigenvectors} of $\Phi$. Therefore, at a fixed point of Algorithm~\ref{alg:ism} given a full rank $\Phi$ and an eigengap that satisfies Eq.~(\ref{eq:conclusion}), the 1st and 2nd order conditions are satisfied.

\end{proof}

\textbf{Subspace Discovery. }
The proof from Lemma~\ref{eq:2nd_lemma} also has significant implications for subspace discovery. Since the exact dimension $q$ of the subspace is unknown 
while discovering the subspace $W \in \mathbb{R}^{d \times q}$, Eq.~(\ref{eq:conclusion}) proposes a guideline to determine the minimally sufficient $q$. However, since the computation of $\mathcal{T}_1$ is challenging, a faster guideline that works well in practice is to set $q$ at the maximum eigengap where 
   \begin{align} 
    q^* = \underset{q}{\arg\max}\,\left(\underset{i}{\min}\, \bar{\Lambda}_i
        - \underset{j}{\max}\, \Lambda_j\right).
    \label{eq:get_q}
    \end{align}
    
\textbf{Computational complexity analysis. } Since both SM and DG required multiple random initializations to discover an optimal solution, we use $i$ to denote the number of associated iterations. We denote the internal iterations required for convergence for each algorithm as $t$. With these two extra notations, the computational complexity of each algorithm for Gaussian and Polynomial kernel is presented in Table~\ref{table:complexity}. Since the Linear and Squared kernels have a closed form solution, their computational complexities are not included. From the experiments, we have observed that while $t$ for SM and DG generally range between 30 to 400, the $t$ value for ISM is generally below 5.
\begin{table}[h]
    \centering
    {\scriptsize
    \begin{tabular}{l|c|c|c|c}
    Kernel & \textbf{ISM$^+$} & ISM & SM & DG\\ \hline
    Gaussian & $O(n^2d t)$ & $O(n^2dq^2 t)$ & $O(n^2dq^2 t i)$ & $O(n^2dq^2 t i)$\\
    Polynomial & $O(n^2d t)$ & $O(n^2dq t)$ & $O(n^2dq t i)$ & $O(n^2dq^2 t i)$\\
    \end{tabular}
    }
    \caption{Computational complexity analysis.}
    \label{table:complexity}
\end{table}

%\textbf{Convergence Analysis: }  Assuming that both the data and the labels are real and bounded. The norm for the kernels $K_{XW}$ and $K_Y$ would also be bounded. Sicne $H$ is only a centering matrix, the HSIC objective of $Tr(K_{XW}HK_YH)$ is a bounded value. As we iterate maximization between $K_{XW}$ and $K_Y$, it generates a monotonically increasing sequence. Based on the Monotone Convergence Theorem~\cite{yeh2006real}, such a sequence converges. 

\label{convergence}
\textbf{Convergence Criteria: } Since the objective is to discover a linear subspace, the rotation of the space does not affect the solution. Therefore, instead of constraining the solution on the Stiefel Manifold, the manifold can be relaxed to a Grassmann Manifold. This implies that Algorithm~\ref{alg:ism} can reach convergence as long as the columns space spanned by $W$ are identical. To identify the overlapping span of two spaces, we can append the two matrices into $\mathcal{W} = [W_k W_{k+1}]$ and observe the rank of $\mathcal{W}$. In theory, the rank should equal to $q$, however, a hard threshold on rank often suffers from numerical inaccuracies.

One approach is to study the principal angles (`angles between flats') between the subspaces spanned by $W_k$ and $W_{k+1}$. This is based on the observation that if the maximal principal angle $\theta_{\text{max}} = 0$, then the two subspaces span the same space. The maximal principal angle between subspaces spanned by $W_k$ and $W_{k+1}$ can be found by computing $U \Sigma V^T = W_k^T W_{k+1}$ \cite{knyazev2012principal}. The cosines of the principal angles between $W_k$ and $W_{k+1}$ are the singular values of $\Sigma$, thus $\theta_{\text{max}} = \cos^{-1}(\sigma_{\text{min}})$. Computation of $\theta_{\text{max}}$ requires two matrix multiplications to form $V \Sigma^2 V^T = (W_k^T W_{k+1})^T (W_k^T W_{k+1})$ and then a round of inverse iteration to find $\sigma_{\text{min}}^2$. Although this approach confirms the convergence definitively, in practice, we avoid this extra computation by using the convergence of eigenvalues (of $\Phi$) between iterations as a surrogate. Since eigenvalues are already computed during the algorithm, no additional computations are required. Although tracking eigenvalue of $\Phi$ for convergence is vulnerable to false positive errors, in practice, it works consistently well. Therefore, we recommend to use the eigenvalues as a preliminary check before defaulting to principal angles.

\textbf{Initialization. }
Since the objective Lagrangian is non-convex, a solution can be achieved faster and more accurately if the algorithm is initialized at an intelligent starting point. Ideally, we wish to have a closed-form solution that yields the global optimal without any iterations. However, this is not possible since $\Phi$ is a function of $W$. ISM circumvents this problem by approximating the kernel using Taylor Series up to the 2nd order while expanding around 0. This approximation has the benefit of removing the dependency of $W$ for $\Phi$, therefore, a global minimum can be achieved using the approximated kernel. The ISM algorithm uses the global minimum found from the approximated kernel as the initialization point. Here, we provide a generalized derivation for the entire ISM kernel family. First, we note that the 2nd order Taylor expansion for $f(\beta(W))$ around 0 is
    %\begin{align} 
    $f(\beta(W)) \approx f(0) + \frac{1}{2!}\Tr(W^T f''(0) W)$,
    %\end{align}   
where the 1st order expansion around 0 is equal to 0. Therefore, the ISM Lagrangian can be approximated with
    \begin{align} 
    \begin{split} 
    \mathcal{L} = -\sum_{i,j} \Gamma_{i,j} \left[f(0)+\frac{1}{2!}\Tr(W^T f''(0) W)\right]\\
    - \Tr(\Lambda(W^TW-I)),
    \end{split} 
    \end{align}   
where the gradient of the Lagrangian is
    \begin{align} 
    \begin{split} 
    \nabla_W \mathcal{L} = -\sum_{i,j} \Gamma_{i,j} f''(0) W - 2 W \Lambda.
    \end{split} \label{eq:approx_grad}
    \end{align}   
Next, we look at the kernel function $f(\beta(W))$ more closely. The Hessian is computed as
    \begin{align} 
    f'(\beta(W)) = \nabla_\beta f(\beta(W)) \nabla_W \beta(W),\\
    f''(\beta(W=0)) = \nabla_{\beta} f(\beta(0)) \nabla_{W,W} \beta(0).
    \end{align}   
Because $\nabla_{\beta} f(\beta(0))$  is just a constant, we can bundle all constants into this term and refer to it as $\mu$. Since $\nabla_{W,W} \beta(0) = A_{i,j}$, the Hessian is simply $\mu A_{i,j}$ regardless of the kernel. By combining constants setting the gradient of Eq.~(\ref{eq:approx_grad}) to 0, we get the expression
    \begin{align} 
    \begin{split} 
    \left[-\sign(\mu) \sum_{i,j} \Gamma_{i,j} A_{i,j}\right] W = W \Lambda,
    \end{split} \label{eq:initial_eq}
    \end{align}   
where if we let $\Phi=-\sign(\mu) \sum_{i,j} \Gamma_{i,j} f''(0)$, we get a $\Phi$ that is not dependent on $W$. Therefore, a closed-form global minimum of the second-order approximation can be achieved. It should be noted that while the magnitude of $\mu$ can be ignored, the sign of $\mu$ cannot be neglected since it flips the sign of the eigenvalues of $\Psi$. Following Eq. (\ref{eq:initial_eq}), the initial $\Phi_0$ for each kernel is shown in Table~\ref{table:init_phis}. We also provide detailed proofs for each kernel in Appendix~\ref{app:deriv_phi_0}. 

\section{Experiments}
     %To establish ISM as a superior optimization technique, we 
     We compare ISM$^+$ against competing state-of-the-art manifold optimization algorithms: original ISM \textbf{(ISM)} \cite{wu2018iterative}, Dimension Growth \textbf{(DG)} \cite{niu2014iterative}, the Stiefel Manifold approach \textbf{(SM)} \cite{wen2013feasible}, and the Grassmann Manifold \textbf{(GM)} \cite{boumal2011rtrmc} in terms of speed of execution, the magnitude of the objective, and the quality of the result. Since only \citet{manopt} for GM supplied software implementation, we implemented ISM, DG and SM strictly based on the original papers. For the experiments, we have chosen the supervised objective from Eq.~(\ref{eq:sdr_1}) and the unsupervised objective from Eq.~(\ref{eq:udr_1}) to showcase the results.

\textbf{Datasets.}
    The experiment includes 4 real datasets. The Breast Cancer ~\cite{breastcancer}, the Wine ~\cite{Dua:2017} and the Car datasets \cite{Dua:2017} have common structures often encountered in data science. The Face  dataset ~\cite{bay2000uci} includes images of 20 people in various orientations. This dataset was chosen to accentuate ISM's ability to handle high dimensional data. For each experiment, we split the data into training and test. Each algorithm is trained on the training set while we report results on the test set.

\textbf{Evaluation Metric.}
    In the supervised case, we measure quality of results by reporting the test classification error of a Support Vector Machine (SVM) with a Gaussian kernel after the dimension is reduced by the various competing algorithms. 
    %the classification results from Support Vector Machine (SVM) after reducing the dimension is used to compare against the ground truth to acquire the percent error. This is computed by dividing the number of errors against the number of samples. 
    In the unsupervised case, we report the Normalized Mutual Information (NMI) \cite{strehl2002cluster} to compare the clustering labels against the ground truth. The NMI is a similarity measure confined within the range of [0,1] with 0 denoting no relationship and 1 as a perfect match. If we let $Z_i$ and $Z_j$ be two clustering assignments, NMI can be calculated with $NMI(Z_i,Z_j) = I(Z_i,Z_j)(H(Z_i)H(Z_j))^{-1/2}$, where $I(Z_i,Z_j)$ is the mutual information between $Z_i$ and $Z_j$, and $H(Z)$ computes the entropy of $Z$. 
    
    Profile of each dataset in Table~\ref{table:supervised_results} and \ref{table:unsupervised_results} use $n$ as the number of samples, $d$ as the original dimension, and $q$ as the final dimension. Times are broken down into days (d), hours (h), and seconds (s). The fastest optimization time, the lowest objective value (cost), and best error/NMI results are bolded. 
    
\textbf{Hyperparameter Settings. }
    All experiments use a Gaussian kernel, where $\sigma$ is the median of the pair-wise Euclidean distance. The dimension of subspace $q$ is selected based on the maximum eigengap. To maintain consistency, all algorithms used the CPU implementation without any extra GPU boost. 

\textbf{Supervised Experiments. }
    We perform supervised dimension reduction via Eq.~(\ref{eq:sdr_1}) and minimize the objective with all competing algorithms. Once the projection matrix is learned, we perform SVM classification on the data with reduced dimension. Among the optimization techniques, we record the runtime, final cost, and the classification error. For comparison, we also classify the data using an SVM without reducing the dimension. Moreover, we include Principal Component Analysis (PCA) with SVM and Linear Discriminant Analysis (LDA) as reference techniques to demonstrate ISM's comparable runtime. The results are reported in Table~\ref{table:supervised_results}. 
    
    From the first five rows of Table~\ref{table:supervised_results}, we note that ISM$^+$ provides a significant speed improvement compared to prior optimization techniques. Due to this improvement, dimension reduction via HSIC becomes comparable in speed to the fastest traditional methods, i.e., PCA, or LDA. This improvement is especially prominent when the original dimension is high. We highlight that for the Face dataset, it took DG 1.78 days, while ISM$^+$ converged within 0.3 seconds. While ISM$^+$ provided up to $10^5$-fold speed improvement, it consistently yields a lower cost. In terms of classification error, while PCA or LDA yields lower error depending on the data, ISM$^+$+SVM always yields the lowest error rate.
    
    \begin{table}[!t]
      \footnotesize
      %\scriptsize
      \centering
      \begin{tabular}{c|c|c|c}
        \toprule
        \multicolumn{4}{c}{Data = Wine, $n=178$, $d=13$, $q=4$}\\
        \toprule
        Algorithms & time & Cost & Error \% \\
        \midrule
           \textbf{ISM$^+$ + SVM}
             	& \textbf{0.006s}
            	& \textbf{-1628.14}
            	& \textbf{0\%} \\
           \textbf{ISM + SVM}
             	& 1.86s
            	& \textbf{-1628.14}
            	& \textbf{0\%} \\           	
           \textbf{DG + SVM}
             	& 7.4s
            	& -1491.2
            	& 1.7\% \\           	
           \textbf{SM + SVM}
             	& 44s
            	& -1627.8
            	& 0.6\% \\           	       
           \textbf{GM + SVM}
             	& 1008s
            	& -1622
            	& 1.7\% \\           	                  	
            LDA
             	& 0.002s
            	& -
            	& \textbf{0\%} \\
            PCA + SVM
             	& 0.001s
            	& -
            	& 1.7\% \\
            SVM
             	& 0.003s
            	& -
            	& \textbf{0\%} \\
        \toprule
        \toprule
        \multicolumn{4}{c}{Data = Cancer, $n=683$, $d=9$, $q=4$}\\
        \midrule
            \textbf{ISM$^+$ + SVM}
             	& \textbf{0.005s}
            	& \textbf{-1645.7}
            	& \textbf{1.5\% } \\
           \textbf{ISM + SVM}
              	& 0.92s
            	& \textbf{-1645.7}
            	& \textbf{1.5\% } \\          
            \textbf{DG + SVM}
             	& 3.7s
            	& -1492
            	& \textbf{1.5\% } \\           	
            \textbf{SM + SVM}
             	& 22.75s
            	& -1644.8
            	& \textbf{1.5\% } \\           	       
          \textbf{GM + SVM}
             	& 1002s
            	& -1641
            	& \textbf{1.5\%} \\           	            
            LDA
             	& 0.001s
            	& -
            	& 3\% \\
            PCA + SVM
             	& 0.001s
            	& -
            	& \textbf{1.5\% } \\
            SVM
             	& 0.001s
            	& -
            	& \textbf{1.5\% } \\
         \toprule
        \toprule
        \multicolumn{4}{c}{Data = Car, $n=1728$, $d=6$, $q=4$}\\
        \midrule
            \textbf{ISM$^+$ + SVM}
             	& \textbf{0.02s}
            	& \textbf{-2304.1}
            	& \textbf{0\%} \\
            \textbf{ISM + SVM}
             	& 95.2s
            	& \textbf{-2304.1}
            	& \textbf{0\%} \\           	
            \textbf{DG + SVM}
             	& 32.5s
            	& -2014.9
            	& 2.3\% \\           	
            \textbf{SM + SVM}
             	& 52.4s
            	& -2268.2
            	& \textbf{0\%} \\           	           
            \textbf{GM + SVM}
             	& 1344s
            	& -2275
            	& 13\% \\           	            
            LDA
             	& 0.03s
            	& -
            	& \textbf{0\%} \\
            PCA + SVM
             	& 0.001s
            	& -
            	& 5\% \\
            SVM
             	& 0.008s
            	& -
            	& \textbf{0\%} \\
        \toprule
        \toprule
        \multicolumn{4}{c}{Data = Face , $n=624$, $d=960$, $q=20$}\\
        \midrule
            \textbf{ISM$^+$ + SVM}
             	& \textbf{0.3s}
            	& \textbf{-4685}
            	& \textbf{0\%} \\
             \textbf{ISM + SVM}
             	& 13320s/3.7h
            	& \textbf{-4685}
            	& \textbf{0\%} \\           	
            \textbf{DG + SVM}
             	& 150774s/1.78d
            	&  -4280
            	& \textbf{0\%} \\           	
            \textbf{SM + SVM}
             	& 49681s/13.8h
            	& -4680
            	& \textbf{0\%} \\           	           	
            \textbf{GM + SVM}
             	& 1140s
            	& -1011
            	& \textbf{0.5\%} \\           	         
            LDA
             	& 0.214s
            	& -
            	& \textbf{0\%}\\
            PCA + SVM
             	& 0.078s
            	& -
            	& \textbf{0\%} \\
            SVM
             	& 0.69s
            	& -
            	& 0.2\% \\           	
        \toprule
        \bottomrule       
      \end{tabular}
      \caption{SVM classification after dimension reduction.}
      \label{table:supervised_results}
    \end{table}

 \textbf{Unsupervised Experiments. }
    We report unsupervised analysis via optimizing Eq.~(\ref{eq:udr_1}) in Table~\ref{table:unsupervised_results}. We solve the objective via alternating minimization by initializing $W$ as the identity matrix. Once $W$ has converged to a fixed point, we project the data onto the lower dimension and perform Spectral Clustering. Besides the 5 optimization techniques, we also report Spectral Clustering (SC) without any dimension reduction as well as Spectral Clustering with dimensions reduced by PCA (PCA + SC). 
   
    Similar to Supervised experiments, ISM$^+$ consistently obtains the lowest cost while requiring significantly less training time. Although a lower objective cost does not guarantee a higher NMI against the ground truth, it is observed that ISM$^+$ generally outperforms other optimization techniques. 
    
    \begin{table}[!t]
      \footnotesize
      \centering
      \begin{tabular}{c|c|c|c}
        \toprule
        \multicolumn{4}{c}{Data = Wine, $n=178$, $d=13$, $q=4$}\\
        \toprule
        Algorithms & time & Cost & NMI\\
        \midrule
           \textbf{ISM$^+$ + SC}
             	& \textbf{0.037s}
            	& \textbf{-39.1}
            	& \textbf{0.88} \\
           \textbf{ISM + SC}
             	& 5.3s
            	& \textbf{-39.1}
            	& \textbf{0.88} \\           	
           \textbf{DG + SC}
             	& 10.63s
            	& -38.07
            	& 0.835 \\           	
           \textbf{SM + SC}
             	& 21.33s
            	& \textbf{-39.1}
            	& \textbf{0.88} \\           	       
           \textbf{GM + SC}
             	& 2022s
            	& \textbf{-39.1}
            	& 0.878\\           	                  	
            PCA + SC
             	& 0.03s
            	& -
            	& 0.835 \\
            SC
             	& 0.003s
            	& -
            	& 0.835 \\
        \toprule
        \toprule
        \multicolumn{4}{c}{Data = Cancer, $n=683$, $d=9$, $q=4$}\\
        \midrule
            \textbf{ISM$^+$ + SC}
             	& \textbf{0.016s}
            	& \textbf{-33.3}
            	& 0.862 \\
            \textbf{ISM + SC}
             	& 1.3s
            	& \textbf{-33.3}
            	& 0.862 \\           	
            \textbf{DG + SC}
             	& 4.26s
            	& -32.86
            	& \textbf{0.899} \\           	
            \textbf{SM + SC}
             	& 3.2s
            	& \textbf{-33.3}
            	& 0.862 \\           	       
            \textbf{GM + SC}
             	& 3.2s
            	& \textbf{-33.3}
            	& 0.862 \\           	                  	
            PCA + SC
             	& 0.001s
            	& -
            	& 0.862 \\
            SC
             	& 0.001s
            	& -
            	& 0.862 \\
         \toprule
        \toprule
        \multicolumn{4}{c}{Data = Car, $n=1728$, $d=6$, $q=4$}\\
        \midrule
            \textbf{ISM$^+$ + SC}
             	& \textbf{0.11s}
            	& \textbf{-79.18}
            	& \textbf{0.35} \\
            \textbf{ISM + SC}
             	& 210.1s
            	& \textbf{-79.18}
            	& \textbf{0.35} \\           	
            \textbf{DG + SC}
             	& 7366s
            	& -76.1
            	& 0.10 \\           	
            \textbf{SM + SC}
             	& 8.4s
            	& -79.1
            	& 0.315 \\           	           	 
            \textbf{GM + SC}
             	& 2826s
            	& -79
            	& 0.15 \\           	           	           	            	
            PCA + SC
             	& 0.07s
            	& -
            	& 0.29 \\
            SC
             	& 0.06s
            	& -
            	& 0.28 \\
        \toprule
        \toprule
        \multicolumn{4}{c}{Data = Face , $n=624$, $d=960$, $q=20$}\\
        \midrule
            \textbf{ISM$^+$ + SC}
             	& \textbf{0.3s}
            	& \textbf{-171}
            	& \textbf{0.95} \\
            \textbf{ISM + SC}
             	& 15821.6s/4.4h
            	& \textbf{-171}
            	& \textbf{0.95} \\           	
            \textbf{DG + SC}
             	& 158196s/1.83d
            	& -169.6
            	& 0.926 \\           	
            \textbf{SM + SC}
             	& 26733 s/ 7.4 h
            	& -170.98
            	&  \textbf{0.95}\\           	           	
            \textbf{GM + SC}
             	& 174859s/2.02d
            	& -37.4
            	& 0.89 \\
            PCA + SC
             	& 0.14s
            	& -
            	& 0.925 \\
            SC
             	& 0.2s
            	& -
            	& \textbf{0.95} \\           	
        \toprule
        \bottomrule       
      \end{tabular}
      \caption{Spectral clustering after dimension reduction.}
      \label{table:unsupervised_results}
    \end{table}   

\section{Conclusion}
We showed that subspace dimensionality reduction based on HSIC for a variety of machine learning paradigms can be re-expressed into a common cost function (Eq.~(\ref{eq:obj_1})).  We propose an iterative spectral algorithm, ISM$^+$, to solve this non-convex optimization problem constrained on a Grassmann Manifold.  We identified a family of kernels that satisfies the first and second-order local guarantees at the fixed point of ISM$^+$. Our experiments demonstrated ISM$^+$'s superior training time with a consistently lower cost while achieving comparable performance against the state-of-art algorithms.

\clearpage
\bibliography{example_paper.bib}
\bibliographystyle{icml2019}

\clearpage
\onecolumn
\begin{appendices}
\section{Kernel Definitions }
Here we provide the definition of each kernel with relation to the projection matrix $W$ in terms of the kernel and as a function of $\beta=\mathbf{a}W W^T \mathbf{b}$.\\
\textbf{Linear Kernel}
\begin{equation}
    k(x_i,x_j) = x_i^TWW^Tx_j, \hspace{1cm} f(\beta) = \beta.
    \label{eq:linear_kernel}
\end{equation}

\textbf{Polynomial Kernel}
\begin{equation}
    k(x_i,x_j) = (x_i^TWW^Tx_j + c)^p, \hspace{1cm} f(\beta) = (\beta + c)^p.
    \label{eq:poly_kernel}
\end{equation}

\textbf{Gaussian Kernel}
\begin{equation}
    k(x_i,x_j) = e^{-\frac{(x_i-x_j)^TW W^T(x_i-x_j)}{2\sigma^2}},
    \hspace{1cm}
    f(\beta) = e^{-\frac{\beta}{2\sigma^2}}.
    \label{eq:gaussian_kernel}
\end{equation}

\textbf{Squared Kernel}
\begin{equation}
    k(x_i,x_j) = (x_i-x_j)^TW W^T(x_i-x_j), \hspace{1cm} f(\beta) = \beta.
    \label{eq:squared_kernel}
\end{equation}

\textbf{Multiquadratic Kernel}
\begin{equation}
    k(x_i,x_j) = \sqrt{(x_i-x_j)^TW W^T(x_i-x_j) + c^2}, \hspace{1cm} f(\beta) = \sqrt{\beta + c^2}.
    \label{eq:squared_kernel}
\end{equation}

\end{appendices}

%\textbf{Laplacian Kernel}
%\begin{equation}
%    k(x_i,x_j) = e^{-\alpha [(x_i-x_j)^TW W^T(x_i-x_j)]^{1/2}}
%    \label{eq:cross_cov}
%\end{equation}

\begin{appendices}
\section{Theorem 12.5 }\label{app:2nd_order}
\begin{lemma} 
  [Nocedal,Wright, Theorem 12.5~{\cite{wright1999numerical}}] (2nd Order Necessary Conditions)
    Consider the optimization problem:
  $ \min_{W : h (W) = 0} f (W), $
where $f : \mathbb{R}^{d \times q} \to \mathbb{R}$ and $h :
  \mathcal{R}^{d \times q} \to \mathbb{R}^{q \times q}$ are twice continuously
  differentiable. Let   $\mathcal{L}$ be the Lagrangian and $h(W)$ its equality constraint. Then, a local minimum must satisfy the following  conditions:
  \begin{subequations}
    \begin{align} 
    &\nabla_W \mathcal{L} (W^{\ast}, \Lambda^{\ast}) = 0, \label{eq:1st_W}\\
    &\nabla_{\Lambda} \mathcal{L} (W^{\ast}, \Lambda^{\ast}) = 0, \label{eq:1st_lambda}\\
    \begin{split}
     \tmop{Tr}( Z^T &\nabla_{W W}^2 \mathcal{L}(W^{\ast}, \Lambda^{\ast}) Z) \geq 0 \\ 
     &\tmop{for}   \tmop{all} Z \neq 0 , \tmop{with}  \nabla h (W^{\ast})^T Z = 0.   \label{eq:2nd_W_in_append} 
    \end{split}
    \end{align}   
  \end{subequations}
\end{lemma}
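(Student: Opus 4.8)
The plan is to run the classical Lagrange-multiplier argument for equality-constrained programs: reduce the constrained problem along a feasible arc to a one-dimensional unconstrained minimization, and read off first- and second-order information at the base point $t=0$. First I would dispatch the first-order conditions. Assuming the standard constraint qualification at the local minimizer $W^{\ast}$ --- that the differential $Dh(W^{\ast})$ has full rank (the LICQ qualification) --- the Lagrange multiplier theorem \cite{wright1999numerical} supplies a multiplier matrix $\Lambda^{\ast}$ with $\nabla_W \mathcal{L}(W^{\ast},\Lambda^{\ast}) = 0$. The companion equation $\nabla_\Lambda \mathcal{L}(W^{\ast},\Lambda^{\ast}) = 0$ is merely primal feasibility $h(W^{\ast}) = 0$, which holds by hypothesis.

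For the curvature condition I would fix an arbitrary admissible direction $Z \neq 0$ satisfying $Dh(W^{\ast})[Z] = 0$; for the orthogonality constraint $h(W)=W^TW-I$ this is exactly the relation $(W^{\ast})^T Z + Z^T W^{\ast} = 0$ of Eq.~(\ref{eq:wzzw}). The heart of the argument is the construction of a twice continuously differentiable feasible arc $W(\cdot)$ on an interval $(-\varepsilon,\varepsilon)$ with $W(0) = W^{\ast}$, $\dot{W}(0) = Z$, and $h(W(t)) \equiv 0$. Its existence follows from the implicit function theorem: full rank of $Dh(W^{\ast})$ lets one correct the straight line $W^{\ast} + tZ$ by an $O(t^2)$ term drawn from a complement of the tangent space, so that the path stays exactly on the constraint surface while retaining the prescribed initial velocity $Z$.

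Along this arc set $\phi(t) = f(W(t))$. Since $W(0)=W^{\ast}$ minimizes $f$ over the feasible set and each $W(t)$ is feasible for small $t$, the scalar function $\phi$ attains a local minimum at $t=0$, so $\phi''(0) \geq 0$. Because $h(W(t)) \equiv 0$, I may substitute the Lagrangian for the objective, $\phi(t) = \mathcal{L}(W(t),\Lambda^{\ast})$, and differentiate twice by the chain rule. Evaluating at $t=0$ produces two contributions: the one pairing $\ddot{W}(0)$ against $\nabla_W \mathcal{L}(W^{\ast},\Lambda^{\ast})$ vanishes by the first-order condition just established, leaving precisely $\phi''(0) = \Tr(Z^T \nabla^2_{WW}\mathcal{L}(W^{\ast},\Lambda^{\ast}) Z)$. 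Combined with $\phi''(0) \geq 0$, this is exactly Eq.~(\ref{eq:2nd_W_in_append}), and it holds for every admissible $Z$.

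The main obstacle is the feasible-arc construction together with its constraint qualification: one must confirm that $\{Z : Dh(W^{\ast})[Z] = 0\}$ genuinely equals the tangent cone of the feasible set at $W^{\ast}$ (so that no admissible direction is spurious and every such $Z$ is the velocity of some feasible path), and that the path can be taken $C^2$ so that $\phi''(0)$ is well defined. For the orthogonality constraint relevant to Eq.~(\ref{eq:obj_1}), $Dh(W^{\ast})[Z] = (W^{\ast})^T Z + Z^T W^{\ast}$ is surjective onto the symmetric $q\times q$ matrices, so LICQ holds and the arc can even be realized explicitly by retracting $W^{\ast}+tZ$ onto the Stiefel manifold; this is the only step demanding analytic care beyond routine differentiation.
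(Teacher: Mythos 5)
This lemma is not proved in the paper at all: it is transcribed, as background, from Nocedal and Wright (Theorem 12.5) and placed in the appendix ``for convenience,'' so there is no in-paper argument to compare against; your task was effectively to reconstruct the textbook proof, and your reconstruction is correct. The route you take --- Lagrange multiplier theorem under LICQ for the first-order conditions, then a $C^2$ feasible arc $W(t)$ with $W(0)=W^\ast$, $\dot W(0)=Z$, $h(W(t))\equiv 0$, substitution of the Lagrangian for the objective along the arc, and the observation that the $\ddot W(0)$ contribution to $\phi''(0)$ is annihilated by $\nabla_W \mathcal{L}(W^\ast,\Lambda^\ast)=0$ --- is essentially the argument in the cited reference (Nocedal--Wright phrase it via feasible sequences; your arc construction through the implicit function theorem is the standard equivalent). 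Two points in your write-up deserve emphasis because they go beyond what the paper records. First, you correctly flag that the statement as reproduced in the paper omits the constraint qualification: without LICQ (or a substitute), a local minimizer need not admit multipliers, and the linearized set $\{Z : \nabla h(W^\ast)^T Z = 0\}$ can strictly contain the tangent cone, in which case the curvature inequality of Eq.~(\ref{eq:2nd_W_in_append}) can fail; the hypothesis is present in the original Theorem 12.5 and must be read into the lemma. Second, your verification of LICQ for the constraint the paper actually uses, $h(W)=W^TW-I$, is the right one, with one caveat: $Z \mapsto (W^\ast)^T Z + Z^T W^\ast$ is surjective onto the \emph{symmetric} $q\times q$ matrices (take $Z=\tfrac{1}{2}W^\ast S$), so one must regard $h$ as mapping into the space of symmetric matrices; viewed as a map into all of $\mathbb{R}^{q\times q}$, as the lemma's signature suggests, the off-diagonal constraints are duplicated and literal linear independence of the constraint gradients fails. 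With that reading, your proof is complete and consistent with how the paper invokes the lemma.
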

\end{appendices}

\begin{appendices}
\section{Derivation for $\sum_{i,j} \Psi_{i,j} A_{i,j}$ if $A_{i,j}=x_ix_j^T+x_jx_i^T$}
Since $\Psi$ is a symmetric matrix and $A_{i, j} = ( x_i x_j^T + x_j x_i^T)$,
we can rewrite the expression into
\[ \sum_{i, j} \Psi_{i, j} A_{i, j} = 2 \sum_{i, j}^n \Psi_{i, j} x_i x_j^T .
\]
If we expand the summation for $i = 1$, we get
\[ \begin{array}{lll}
     {}[ \Psi_{1, 1} x_1 x_1^T + \ldots + \Psi_{1, n} x_1 x_n^T] & = & x_1 [
     \Psi_{1, 1} x_1^T + \ldots + \Psi_{1, n} x_n^T]\\
     & = & x_1 \left[ \left[ \begin{array}{lll}
       x_1 & \ldots & x_n
     \end{array} \right] \left[ \begin{array}{l}
       \Psi_{1, 1}\\
       .\\
       \Psi_{1, n}
     \end{array} \right] \right]^T\\
     & = & x_1 \left[ \left[ \begin{array}{lll}
       \Psi_{1, 1} & \ldots & \Psi_{1, n}
     \end{array} \right] \left[ \begin{array}{l}
       x_1^T\\
       .\\
       x_n^T
     \end{array} \right] \right] .
   \end{array} \]
Now if we sum up all $i$, we get
\[ \begin{array}{lll}
     \Psi_{i, j} x_i x_j^T & = & x_1 \left[ \left[ \begin{array}{lll}
       \Psi_{1, 1} & \ldots & \Psi_{1, n}
     \end{array} \right] \left[ \begin{array}{l}
       x_1^T\\
       .\\
       x_n^T
     \end{array} \right] \right] + \ldots + x_n \left[ \left[
     \begin{array}{lll}
       \Psi_{n, 1} & \ldots & \Psi_{n, n}
     \end{array} \right] \left[ \begin{array}{l}
       x_1^T\\
       .\\
       x_n^T
     \end{array} \right] \right],\\
     & = & \left[ x_1 \left[ \begin{array}{lll}
       \Psi_{1, 1} & \ldots & \Psi_{1, n}
     \end{array} \right] + \ldots + x_n \left[ \begin{array}{lll}
       \Psi_{n, 1} & \ldots & \Psi_{n, n}
     \end{array} \right] \right] \left[ \begin{array}{l}
       x_1^T\\
       .\\
       x_n^T
     \end{array} \right],\\
     & = & \left[ \left[ \begin{array}{lll}
       x_1 & \ldots & x_n
     \end{array} \right] \left[ \begin{array}{l}
       \Psi_{1, 1}\\
       .\\
       \Psi_{n, 1}
     \end{array} \right] + \ldots + \left[ \begin{array}{lll}
       x_1 & \ldots & x_n
     \end{array} \right] \left[ \begin{array}{l}
       \Psi_{1, n}\\
       .\\
       \Psi_{n, n}
     \end{array} \right] \right] \left[ \begin{array}{l}
       x_1^T\\
       .\\
       x_n^T
     \end{array} \right],\\
     & = & \left[ \begin{array}{lll}
       x_1 & \ldots & x_n
     \end{array} \right] \left[ \begin{array}{lll}
       \left[ \begin{array}{l}
         \Psi_{1, 1}\\
         .\\
         \Psi_{n, 1}
       \end{array} \right] & \ldots & \left[ \begin{array}{l}
         \Psi_{1, n}\\
         .\\
         \Psi_{n, n}
       \end{array} \right]
     \end{array} \right] \left[ \begin{array}{l}
       x_1^T\\
       .\\
       x_n^T
     \end{array} \right] .
   \end{array} \]
Given that $X = \left[ \begin{array}{lll}
  x_1 & \ldots & x_n
\end{array} \right]^T$, the final expression becomes.
\[ 2 \sum_{i, j}^n \Psi_{i, j} x_i x_j^T = 2 X^T \Psi X. \]
\label{app:xixj}
\end{appendices}

\begin{appendices}
\section{Derivation for $\sum_{i,j} \Psi_{i,j} A_{i,j}$ if $A_{i,j}=(x_i-x_j)(x_i-x_j)^T+(x_i-x_j)(x_i-x_j)^T$}
Since $\Psi$ is a symmetric matrix, and $A_{i, j} = ( x_i - x_j) ( x_i -
x_j)^T + ( x_i - x_j) ( x_i - x_j)^T = 2 ( x_i - x_j) ( x_i - x_j)^T$, we can
rewrite the expression into
\[ \begin{array}{lll}
     \sum_{i, j} \Psi_{i, j} A_{i, j}  & = & 2 \sum_{i, j} \Psi_{i, j} ( x_i -
     x_j) ( x_i - x_j)^T\\
     & = & 2 \sum_{i, j} \Psi_{i, j} ( x_i x_i^T - x_j x_i^T - x_i x_j^T +
     x_j x_j^T)\\
     & = & 4 \sum_{i, j} \Psi_{i, j} ( x_i x_i^T - x_j x_i^T)\\
     & = & \left[ 4 \sum_{i, j} \Psi_{i, j} ( x_i x_i^T) \right] - \left[ 4
     \sum_{i, j} \Psi_{i, j} ( x_j x_i^T) \right] .
   \end{array} \]
If we expand the 1st term where $i = 1$, we get
\[ \sum_{i = 1, j}^n \Psi_{1, j} ( x_1 x_1^T) = \Psi_{1, 1} ( x_1 x_1^T) +
   \ldots + \Psi_{1, n} ( x_1 x_1^T) = \left[ \sum_{i = 1, j}^n \Psi_{1, j}
   \right] x_1 x_1^T . \]
From here, we notice that $\left[ \sum_{i = 1, j}^n \Psi_{1, j} \right]$ is
the degree $d_{i = 1}$ of $\Psi_{i = 1}$. Therefore, if we sum up all $i$
values we get
\[ \sum_{i, j} \Psi_{i, j} ( x_i x_i^T) = d_1 x_1 x_1^T + \ldots + d_n x_n
   x_n^T \nosymbol . \]
If we let $D_{\Psi}$ be the degree matrix of $\Psi$, then this expression
becomes
\[ 4 \sum_{i, j} \Psi_{i, j} ( x_i x_i^T) = 4 X^T D_{\Psi} X. \]
Since Appendix~\ref{app:xixj} has already proven the 2nd term, together we get
\[ 4 \sum_{i, j} \Psi_{i, j} ( x_i x_i^T) - 4 \sum_{i, j} \Psi_{i, j} ( x_j
   x_i^T) = 4 X^T D_{\Psi} X - 4 X^T \Psi X = 4 X^T [ D_{\Psi} - \Psi] X. \]
\label{app:xi-xj}
\end{appendices}

\begin{appendices}
\section{Derivation for each $\Phi_0$}
\label{app:deriv_phi_0}
Using Eq.~(\ref{eq:initial_eq}), we know that
    \begin{equation}
        \Phi_0 = -sign(\mu) \sum_{i,j} \Gamma_{i,j} A_{i,j}.
    \end{equation}
If $\mathbf{a}$ and $\mathbf{b}$ are both defined as $x_i-x_j$, then 
    \begin{equation}
        \Phi_0 = -sign(4 \mu) X^T(D_\Gamma - \Gamma) X.
        \label{eq:phi_0_form_1}
    \end{equation}
However, if $\mathbf{a}$ and $\mathbf{b}$ are defined as $(x_i,x_j)$, then 
    \begin{equation}
        \Phi_0 = -sign(2 \mu) X^T\Gamma X.
        \label{eq:phi_0_form_2}
    \end{equation}
Therefore, to compute $\Phi_0$, the key is to first determine the ($\mathbf{a}$ , $\mathbf{b}$) based on the kernel and then find $\mu$ to determine the sign.

%------------------------------------------------
\textbf{$\mathbf{\Phi_0}$ for the Linear Kernel: }
With a Linear Kernel, $(\mathbf{a},\mathbf{b})$ uses $(x_i,x_j)$, therefore Eq.~(\ref{eq:phi_0_form_2}) is use. Since $f(\beta) = \beta$, the gradient with respect to $\beta$ is 
    \begin{equation}
        -sign(2 \nabla_\beta f(\beta)) = -sign(2) = -1.
    \end{equation}
Therefore, 
    \begin{equation}
        \Phi_0 = -X^T\Gamma X.
    \end{equation}

%------------------------------------------------
\textbf{$\mathbf{\Phi_0}$ for the Polynomial Kernel: }
With a Polynomial Kernel, $(\mathbf{a},\mathbf{b})$ uses $(x_i,x_j)$, therefore Eq.~(\ref{eq:phi_0_form_2}) is use. Since $f(\beta) = (\beta + c)^p$, the gradient with respect to $\beta$ is 
    \begin{equation}
        -sign(2 \nabla_\beta f(\beta)) = -sign(2p(\beta+c)^{p-1}) = -1.
    \end{equation}
Therefore, 
    \begin{equation}
        \Phi_0 = -X^T\Gamma X.
    \end{equation}

%------------------------------------------------
\textbf{$\mathbf{\Phi_0}$ for the Gaussian Kernel: }
With a Gaussian Kernel, $(\mathbf{a},\mathbf{b})$ uses $x_i-x_j$, therefore Eq.~(\ref{eq:phi_0_form_1}) is use. Since $f(\beta) = e^{-\frac{\beta}{2\sigma^2}}$, the gradient with respect to $\beta$ is 
    \begin{equation}
        -sign(4 \nabla_\beta f(\beta)) = -sign(-\frac{4}{2\sigma^2}e^{-\frac{\beta}{2\sigma^2}}) = 1.
    \end{equation}
Therefore, 
    \begin{equation}
        \Phi_0 =  X^T(D_\Gamma - \Gamma)X.
    \end{equation}

%------------------------------------------------
\textbf{$\mathbf{\Phi_0}$ for the RBF Relative Kernel: }
With a RBF Relative Kernel, it is easier to start with the Lagrangian once we have approximated relative Kernel with the 2nd order Taylor expansion as
    \begin{equation}
        \mathcal{L} \approx
            -\sum_{i,j} \Gamma_{i,j} 
            \left[ 1 + \Tr(W^T(-\frac{1}{\sigma_i \sigma_j}A_{i,j})W) \right]
            - \Tr \left[ \Lambda(W^TW - I) \right].
    \end{equation}
The gradient of the Lagrangian is therefore
    \begin{equation}
        \nabla_W \mathcal{L} \approx
            \left[ \sum_{i,j} \Gamma_{i,j} (\frac{2}{\sigma_i \sigma_j}A_{i,j}) \right] W
            - 2 W \Lambda.
    \end{equation}    
Setting the gradient to 0, we get 
     \begin{equation}
            \left[ \sum_{i,j} (\frac{1}{\sigma_i \sigma_j} \Gamma_{i,j} A_{i,j}) \right] W
            = W \Lambda.
    \end{equation}       
If we let $\Sigma_{i,j} = \frac{1}{\sigma_i \sigma_j}$ and $\Psi = \Sigma \odot \Gamma$, then we end up with     
     \begin{equation}
        4 \left[ X^T (D_{\Psi} - \Psi) X \right] W
            = W \Lambda.
    \end{equation}       
 %------------------------------------------------
\textbf{$\mathbf{\Phi_0}$ for the Squared Kernel: }
With a Squared Kernel, $(\mathbf{a},\mathbf{b})$ uses $x_i-x_j$, therefore Eq.~(\ref{eq:phi_0_form_1}) is use. Since $f(\beta) = \beta$, the gradient with respect to $\beta$ is 
    \begin{equation}
        -sign(4 \nabla_\beta f(\beta)) = -sign(4) = -1.
    \end{equation}
Therefore, 
    \begin{equation}
        \Phi_0 = -X^T(D_\Gamma - \Gamma)X.
    \end{equation}

%-----------------------------------------------
\textbf{$\mathbf{\Phi_0}$ for the Multiquadratic Kernel: }
With a Multiquadratic Kernel, $(\mathbf{a},\mathbf{b})$ uses $x_i-x_j$, therefore Eq.~(\ref{eq:phi_0_form_1}) is use. Since $f(\beta) = \sqrt{\beta + c^2}$, the gradient with respect to $\beta$ is 
    \begin{equation}
        -sign(4 \nabla_\beta f(\beta)) = -sign(\frac{4}{2}(\beta + c^2)^{-1/2}) = -1.
    \end{equation}
Therefore, 
    \begin{equation}
        \Phi_0 =  -X^T(D_\Gamma - \Gamma)X.
    \end{equation}

\end{appendices}

\begin{appendices}
\section{Derivation for each $\Phi$}
\label{app:deriv_phi}
Using Eq.~(\ref{eq:eig_decomp_1}), we know that
    \begin{equation}
        \Phi = -\frac{1}{2}\sum_{i,j} \Gamma_{i,j}[\nabla_\beta f(\beta)] A_{i,j}. 
    \end{equation}
If we let $\Psi=-\Gamma_{i,j} [\nabla_\beta f(\beta)]$ then $\Phi$ can also be written as
    \begin{equation}
        \Phi = \frac{1}{2}\sum_{i,j} \Psi_{i,j} A_{i,j}. 
    \end{equation}
If $\mathbf{a}$ and $\mathbf{b}$ are both defined as $x_i-x_j$, then 
    \begin{equation}
        \Phi = 2 X^T(D_\Psi - \Psi) X.
        \label{eq:phi_form_1}
    \end{equation}
However, if $\mathbf{a}$ and $\mathbf{b}$ are defined as $(x_i,x_j)$, then 
    \begin{equation}
        \Phi = X^T\Psi X.
        \label{eq:phi_form_2}
    \end{equation}
Therefore, to compute $\Phi$, the key is to first determine the ($\mathbf{a}$ , $\mathbf{b}$) based on the kernel and then find the appropriate $\Psi$.

%------------------------------------------------
\textbf{$\mathbf{\Phi}$ for the Linear Kernel: }
With a Linear Kernel, $(\mathbf{a},\mathbf{b})$ uses $(x_i,x_j)$, therefore Eq.~(\ref{eq:phi_form_2}) is use. Since $f(\beta) = \beta$, the gradient with respect to $\beta$ is 
    \begin{equation}
        \Phi = -\frac{1}{2}\sum_{i,j} \Gamma_{i,j}[\nabla_\beta f(\beta)] A_{i,j} 
        = -\frac{1}{2}\sum_{i,j} \Gamma_{i,j} A_{i,j}.
    \end{equation}
Since, we are only interested in the eigenvectors of $\Phi$ only the sign of the constants are necessary. Therefore, 
    \begin{equation}
        \Phi = sign(-1) X^T\Gamma X = -X^T\Gamma X.
    \end{equation}

%------------------------------------------------
\textbf{$\mathbf{\Phi}$ for the Polynomial Kernel: }
With a Polynomial Kernel, $(\mathbf{a},\mathbf{b})$ uses $(x_i,x_j)$, therefore Eq.~(\ref{eq:phi_form_2}) is use. Since $f(\beta) = (\beta + c)^p$, the gradient with respect to $\beta$ is 
    \begin{equation}
         \Phi = -\frac{1}{2}\sum_{i,j} \Gamma_{i,j}[\nabla_\beta f(\beta)] A_{i,j} 
         = -\frac{1}{2}\sum_{i,j} \Gamma_{i,j}[p(\beta+c)^{p-1}] A_{i,j}.
    \end{equation}
Since $p$ is a constant, and $K_{XW,p-1} = (\beta+c)^{p-1}$ is the polynomial kernel itself with power of $(p-1)$, $\Psi$ becomes
    \begin{equation}
        \Psi = \Gamma \odot K_{XW,p-1},
    \end{equation}
and
    \begin{equation}
        \Phi = sign(-p) X^T \Psi X = -X^T \Psi X
    \end{equation}
%------------------------------------------------
\textbf{$\mathbf{\Phi}$ for the Gaussian Kernel: }
With a Gaussian Kernel, $(\mathbf{a},\mathbf{b})$ uses $x_i-x_j$, therefore Eq.~(\ref{eq:phi_0_form_1}) is use. Since $f(\beta) = e^{-\frac{\beta}{2\sigma^2}}$, the gradient with respect to $\beta$ is 
    \begin{equation}
        \Phi = -\frac{1}{2} \sum_{i,j} \Gamma_{i,j}[\nabla_\beta f(\beta)] A_{i,j} 
         = -\frac{1}{2} \sum_{i,j} \Gamma_{i,j}[-\frac{1}{2\sigma^2}e^{-\frac{\beta}{2\sigma^2}}] A_{i,j}= \frac{1}{4\sigma^2} \sum_{i,j} \Gamma_{i,j}[K_{XW}]_{i,j} A_{i,j}. 
    \end{equation}
If we let $\Psi=\Gamma \odot K_{XW}$, then
    \begin{equation}
        \Phi =  sign(\frac{2}{4\sigma^2}) X^T(D_\Psi - \Psi)X = X^T(D_\Psi - \Psi)X.
    \end{equation}
    
 %------------------------------------------------
\textbf{$\mathbf{\Phi}$ for the Squared Kernel: }
With a Squared Kernel, $(\mathbf{a},\mathbf{b})$ uses $x_i-x_j$, therefore Eq.~(\ref{eq:phi_0_form_1}) is use. Since $f(\beta) = \beta$, the gradient with respect to $\beta$ is 
    \begin{equation}
         \Phi = -\frac{1}{2} \sum_{i,j} \Gamma_{i,j}[\nabla_\beta f(\beta)] A_{i,j} 
         = -\frac{1}{2} \sum_{i,j} \Gamma_{i,j}A_{i,j}.    
    \end{equation}
Therefore, 
    \begin{equation}
        \Phi = sign(-1) X^T(D_\Gamma - \Gamma)X = - X^T(D_\Gamma - \Gamma)X.
    \end{equation}   
    
%------------------------------------------------
\textbf{$\mathbf{\Phi}$ for the Multiquadratic Kernel: }
With a Multiquadratic Kernel, $(\mathbf{a},\mathbf{b})$ uses $x_i-x_j$, therefore Eq.~(\ref{eq:phi_0_form_1}) is use. Since $f(\beta) = \sqrt{\beta+c^2}$, the gradient with respect to $\beta$ is 
    \begin{equation}
        \Phi = -\frac{1}{2} \sum_{i,j} \Gamma_{i,j}[\nabla_\beta f(\beta)] A_{i,j} 
         = -\frac{1}{2} \sum_{i,j} \Gamma_{i,j}[\frac{1}{2}(\beta+c^2)^{-1/2}] A_{i,j}= -\frac{1}{4} \sum_{i,j} \Gamma_{i,j}[K_{XW}]_{i,j}^{(-1)} A_{i,j}. 
    \end{equation}
If we let $\Psi=\Gamma \odot K_{XW}^{(-1)}$, then
    \begin{equation}
        \Phi =  sign(-\frac{1}{4}) X^T(D_\Psi - \Psi)X = -X^T(D_\Psi - \Psi)X.
    \end{equation}   
    
\textbf{$\mathbf{\Phi_0}$ for the RBF Relative Kernel: }
With a RBF Relative Kernel, we start with the initial Lagrangian 
    \begin{equation}
        \mathcal{L} = -\sum_{i,j} \Gamma_{i,j} e^{-\frac{Tr(W^TA_{i,j}W)}{2\sigma_i \sigma_j}} -
        \Tr(\Lambda(W^TW - I))
    \end{equation}
where the gradient becomes
    \begin{equation}
        \nabla_W \mathcal{L} = \sum_{i,j} 
        \frac{1}{\sigma_i \sigma_j} \Gamma_{i,j} e^{-\frac{Tr(W^TA_{i,j}W)}{2\sigma_i \sigma_j}}
        A_{i,j} W - 2 W \Lambda.
    \end{equation}
If we let $\Sigma_{i,j} = \frac{1}{\sigma_i \sigma_j}$ then we get 
     \begin{equation}
        \nabla_W \mathcal{L} = \sum_{i,j} 
        \Psi_{i,j} 
        A_{i,j} W - 2 W \Lambda,
    \end{equation}   
where $\Psi_{i,j}=\Sigma_{i,j} \Gamma_{i,j} K_{XW_{i,j}}$. If we apply Appendix \ref{app:xi-xj} and set the gradient to 0, then we get
     \begin{equation}
        4 \left[ X^T (D_{\Psi} - \Psi) X \right] 
        W = 2 W \Lambda.
    \end{equation}   
From here, we see that it has the same form as the Gaussian kernel, with $\Psi$ defined as $\Psi = \Sigma \odot \Gamma \odot K_{XW}$.
\end{appendices}

\begin{appendices}
\section{Computing the Hessian for the Taylor Series}
First we compute the gradient and the Hessian for $\beta(W)$ where
    \begin{align}
    \beta(W) &= a^TWW^Tb,\\
    \beta(W) &= \Tr(W^Tba^TW),\\
    \nabla_W \beta(W) &= [ba^T+ab^T]W,\\
    \nabla_{W,W} \beta(W) &= [ba^T+ab^T],\\
    \nabla_{W,W} \beta(W=0) &= [ba^T+ab^T].\\
    \end{align}
Next, we compute the gradient and Hessian for $f(\beta(W))$ where
    \begin{align}
    f(\beta(W)) &= f(a^TWW^Tb),\\
    f(\beta(W)) &= f(\Tr(W^Tba^TW)),\\
    f'(\beta(W)) &= \nabla_{\beta} f(\beta(W))[ba^T+ab^T]W = 
        \nabla_{\beta} f(\beta(W)) \nabla_W \beta(W)\\
    f''(\beta(W) &= \nabla_{\beta,\beta} f(\beta(W))[ba^T+ab^T]W(...) + \nabla_{\beta} f(\beta(W))[ba^T+ab^T]\\
    f''(\beta(W=0)) &= 0 + \nabla_{\beta} f(\beta(W))\nabla_{W,W} \beta(W=0)\\
    f''(\beta(W=0)) &= \nabla_{\beta} f(\beta(W))\nabla_{W,W} \beta(W=0)\\
    f''(0) &= \mu A_{i,j}.
    \end{align}
Using Taylor Series the gradient of the Lagrangian is approximately
    \begin{align}
    \nabla_{W} \mathcal{L} &\approx -\sum_{i,j} \Gamma_{i,j} f''(0) W - 2W\Lambda,\\
    \nabla_{W} \mathcal{L} &\approx -\mu \sum_{i,j} \Gamma_{i,j} A_{i,j} W - 2W\Lambda.
    \end{align}
Setting the gradient of the Lagrangian to 0 and combining the constant 2 to $\mu$, it yields the relationship
    \begin{align}
    \left[ -\mu \sum_{i,j} \Gamma_{i,j} A_{i,j} \right] W = W\Lambda,\\
    \Phi_0 W = W\Lambda.\\
    \end{align}
\end{appendices}

\begin{appendices}
\section{Lemma 2 Proof}
\label{eq:lemma2condition}
To proof Lemma 2, we must relate the concept of eigengap to the conditions of
\begin{equation}
  \tmop{Tr} ( Z^T \nabla^2_{W W} \mathcal{L} ( W^{\ast}, \Lambda^{\ast}) Z)
  \geq 0 \nocomma \begin{array}{lllll}
    & \forall & Z \neq 0 & \tmop{with} & \nabla h ( W^{\ast})^T Z = 0
    \label{eq:orig_inequality}
  \end{array} . 
\end{equation}
Given the constraint $h ( W) = W^T W - I$, we start by computing the constrain
$\nabla h ( W^{\ast})^T Z = 0$. Given
\begin{equation}
  \nabla h ( W^{\ast})^T Z = \begin{array}{l}
    \lim\\
    t \rightarrow 0
  \end{array} \frac{\partial}{\partial t} h ( W \noplus + t Z),
\end{equation}
the constraint becomes
\begin{equation}
  \begin{array}{lll}
    \nabla h ( W^{\ast})^T Z = 0 & = & \begin{array}{l}
      \lim\\
      t \rightarrow 0
    \end{array} \frac{\partial}{\partial t} [ ( W \noplus + t Z)^T ( W \noplus
    + t Z) - I],\\
    0 & = & \begin{array}{l}
      \lim\\
      t \rightarrow 0
    \end{array} \frac{\partial}{\partial t} [ ( W^T \noplus W + t W^T Z + t
    Z^T W + t^2 Z^T Z) - I],\\
    0 & = & \begin{array}{l}
      \lim\\
      t \rightarrow 0
    \end{array} W^T Z + Z^T W + 2 t Z^T Z.
  \end{array}
\end{equation}
By setting the limit to 0, an important relationship emerges as
\begin{equation}
  \begin{array}{lll}
    0 & = & W^T Z + Z^T W.
    \label{eq:constraint_2nd}
  \end{array}
\end{equation}
Given a full rank operator $\Phi$, its eigenvectors must span the complete
$\mathcal{R}^d$ space. If we let $W$ and $\bar{W}$ represent the eigenvectors
chosen and not chosen respectively from Algorithm 1, and let $B$ and $\bar{B}$
be scambling matrices, then the matrix $Z \in \mathcal{R}^{d \times q}$ can be
rewritten as
\begin{equation}
  Z = W B + \bar{W}  \bar{B} .
  \label{eq:z_equal_to}
\end{equation}
It should be noted that since $W$ and $\bar{W}$ are eigenvalues of the
symmetric matrix $\Phi$, they are orthogonal to each other, i.e., $W^T \bar{W}
= 0$. Furthermore, if we replace $Z$ in Eq.~(\ref{eq:constraint_2nd}) with Eq.~(\ref{eq:z_equal_to}), we get the
condition
\begin{equation}
  \begin{array}{lll}
    0 & = & W^T ( W B + \bar{W}  \bar{B}) + ( W B + \bar{W}  \bar{B})^T W\\
    0 & = & B + B^T .
    \label{eq:antisym}
  \end{array}
\end{equation}
From Eq.~(\ref{eq:antisym}), we observe that $B$ must be a antisymmetric matrix because $B = -B^T$. Next, we work
to compute the inequality of of Eq.~(\ref{eq:orig_inequality}) by noting that
\begin{equation}
  \nabla^2_{W W} \mathcal{L} ( W, \Lambda) Z = \begin{array}{l}
    \lim\\
    t \rightarrow 0
  \end{array} \frac{\partial}{\partial t} \nabla \mathcal{L} ( W + t Z)
  \nosymbol .
\end{equation}
Also note that Lemma 1 has already computed $\nabla_W \mathcal{L} ( W)$ as
\begin{equation}
  \nabla_W \mathcal{L} ( W) = -\frac{1}{2}\left[ \sum_{i, j} \Gamma_{i, j} f' ( \beta)
  A_{i, j} \right] W - W \Lambda .
\end{equation}
Since we need $\nabla_W \mathcal{L}$ to be a function of $W + t Z$ with $t$ as
the variable, we change $\beta ( W)$ into $\beta ( W + t Z)$ with
\begin{equation}
  \begin{array}{lll}
    \beta ( W + t Z) & = & \tmmathbf{a} ( W + t Z) ( W + t Z)^T
    \tmmathbf{b},\\
    & = & \tmmathbf{a}^T W W^T \tmmathbf{b}+ [ \tmmathbf{a}^T ( W Z^T + Z
    W^T) \tmmathbf{b}] t + [ \tmmathbf{a}^T Z Z^T \tmmathbf{b}] t^2,\\
    & = & \beta + c_1 t + c_2 t^2,
    \label{eq:beta}
  \end{array}
\end{equation}
where $\beta$, $c_1$, and $c_2$ are constants with respect to $t$. Using the
$\beta$ from Eq.~(\ref{eq:beta}) with $\nabla_W \mathcal{L}$, we get
\begin{equation}
  \nabla^2_{W W} \mathcal{L} ( W, \Lambda) Z = \begin{array}{l}
    \lim\\
    t \rightarrow 0
  \end{array} \frac{\partial}{\partial t} \left[-\frac{1}{2} \sum_{i, j}  \Gamma_{i, j}
  f' ( \beta + c_1 t + c_2 t^2) A_{i, j} \right] ( W + t Z) - ( W + t Z)
  \Lambda .
\end{equation}
If we take the derivative with respect to $t$ and then set the limit to 0, we
get
\begin{equation}
  \nabla^2_{W W} \mathcal{L} ( W, \Lambda) Z = \left[-\frac{1}{2} \sum_{i, j} \Gamma_{i,
  j} f'' ( \beta) c_1 A_{i, j} \right] W + \left[ -\frac{1}{2} \sum_{i, j} \Gamma_{i, j}
  f' ( \beta) A_{i, j} \right] Z - Z \Lambda .
\end{equation}
Next, we notice the definition of $\Phi = -\frac{1}{2} \sum \Gamma_{i, j} f' ( \beta)
A_{i, j}$ \ from Lemma 1, the term $\tmop{Tr} ( Z^T \nabla^2_{W W} \mathcal{L}
( W, \Lambda) Z)$ can now be expressed as 3 separate terms as
\begin{equation}
\tmop{Tr} ( Z^T \nabla^2_{W W} \mathcal{L} ( W, \Lambda) Z) =\mathcal{T}_1
   +\mathcal{T}_2 +\mathcal{T}_3,
   \label{eq:3_terms}
\end{equation}
where
    \begin{align}
        \mathcal{T}_1 & = \tmop{Tr} \left( Z^T \left[ -\frac{1}{2}\sum_{i, j} \Gamma_{i,
     j} f'' ( \beta) c_1 A_{i, j} \right] W \right),\\
          \mathcal{T}_2 & = \tmop{Tr} ( Z^T \Phi Z),
          \label{eq:app:T2}
          \\
        \mathcal{T}_3 & = - \tmop{Tr} ( Z^T Z \Lambda).
        \label{eq:app:T3}
    \end{align}
Since $\mathcal{T}_1$ cannot be further simplified, the concentration will be
on $\mathcal{T}_2$ and $\mathcal{T}_3$. If we let $\bar{\Lambda}$ and
$\Lambda$ be the corresponding eigenvlaue matrices associated with $\bar{W}$
and $W$, by replacing $Z$ in $\mathcal{T}_2$ from Eq.~(\ref{eq:app:T2}), we get
\[ \begin{array}{lll}
     \tmop{Tr} ( Z^T \Phi Z) & = & \tmop{Tr} ( ( W B + \bar{W} \bar{B})^T \Phi
     ( W B + \bar{W} \bar{B}))\\
     & = & \tmop{Tr} ( \nobracket B^T W^T \Phi W B + \bar{B}^T \bar{W}^T \Phi
     W B + B^T W^T \Phi \bar{W} \bar{B} + \bar{B}^T \bar{W}^T \Phi \bar{W}
     \bar{B}) \nobracket\\
     & = & \tmop{Tr} ( \nobracket B^T W^T W \Lambda B + \bar{B}^T \bar{W}^T W
     \Lambda B + B^T W^T \bar{W} \bar{\Lambda} \bar{B} + \bar{B}^T \bar{W}^T
     \bar{W} \bar{\Lambda} \bar{B}) \nobracket\\
     & = & \tmop{Tr} ( \nobracket B^T \Lambda B + 0 + 0 + \bar{B}^T
     \bar{\Lambda} \bar{B}) \nobracket\\
     & = & \tmop{Tr} ( \nobracket B^T \Lambda B + \bar{B}^T \bar{\Lambda}
     \bar{B}) \nobracket .
   \end{array} \]
By replacing $Z$ from $\mathcal{T}_3$ from Eq.~(\ref{eq:app:T3}), we get
\[ \begin{array}{lll}
     - \tmop{Tr} ( Z^T Z \Lambda) & = & - \tmop{Tr} ( ( W B + \bar{W}
     \bar{B})^T ( W B + \bar{W} \bar{B}) \Lambda)\\
     & = & - \tmop{Tr} ( B^T W^T W B \Lambda + \bar{B}^T \bar{W}^T W B
     \Lambda + B^T W^T \bar{W} \bar{B} \Lambda + \bar{B}^T \bar{W}^T \bar{W}
     \bar{B} \Lambda)\\
     & = & - \tmop{Tr} ( B^T B \Lambda + 0 + 0 + \bar{B}^T \bar{B} \Lambda)\\
     & = & - \tmop{Tr} ( B \Lambda B^T + \bar{B} \Lambda \bar{B}^T) .
   \end{array} \]
The inequality that satisfies the 2nd order condition can now be written as 
\begin{equation}
  \tmop{Tr} ( B^T \Lambda B) + \tmop{Tr} (  \bar{B}^T \bar{\Lambda} \bar{B}) -
  \tmop{Tr} ( B \Lambda B^T) - \tmop{Tr} ( \bar{B} \Lambda \bar{B}^T)
  +\mathcal{T}_1 \geq 0.
  \label{eq:inequality_2}
\end{equation}
Since $B$ is an antisymmetric matrix, $B^T = - B$, and therefore $\tmop{Tr} (
B \Lambda B^T) = \tmop{Tr} ( B^T \Lambda B)$. From this Eq.~(\ref{eq:inequality_2}) can be
rewritten as
\begin{equation}
  \tmop{Tr} ( B^T \Lambda B) - \tmop{Tr} ( B^T \Lambda B) + \tmop{Tr} ( 
  \bar{B}^T \bar{\Lambda} \bar{B}) - \tmop{Tr} ( \bar{B} \Lambda \bar{B}^T)
  +\mathcal{T}_1 \geq 0.
\end{equation}
With the first two terms canceling each other out, the inequality can be
rewritten as
\begin{equation}
  \tmop{Tr} (  \bar{B}^T \bar{\Lambda} \bar{B}) - \tmop{Tr} ( \bar{B} \Lambda
  \bar{B}^T) \geq \mathcal{T}_1 .
\end{equation}
With this inequality, the terms can be further bounded by
\[ \tmop{Tr} ( \bar{B}^T \bar{\Lambda} \bar{B}) \geq \begin{array}{l}
     \min\\
     i
   \end{array} \bar{\Lambda}_i \tmop{Tr} ( \bar{B} \bar{B}^T) \]
\[ \tmop{Tr} ( \bar{B} \Lambda \bar{B}^T) \geq \begin{array}{l}
     \max\\
     j
   \end{array} \Lambda_j \tmop{Tr} ( \bar{B}^T \bar{B}) \]
Noting that since $\tmop{Tr} ( \bar{B} \bar{B}^T) = \tmop{Tr} ( \bar{B}^T
\bar{B})$, we treat it as a constant value of $\alpha$. With this the
inequality can be rewritten as
\[ \left( \begin{array}{l}
     \min\\
     i
   \end{array} \bar{\Lambda}_i - \begin{array}{l}
     \max\\
     j
   \end{array} \Lambda_j \right) \geq \frac{1}{\alpha} \mathcal{T}_1 . \]

\end{appendices}

\end{document}